\newlength{\minipagewidth}
\newlength{\minipagewidthx}
\newcommand{\bookboxx}[1]{\small
\par\medskip\noindent
\framebox[\columnwidth]{
\begin{minipage}{0.95\columnwidth} {#1} \end{minipage} } \par\medskip }
\newtheorem{asm}{Assumption}
\title{Sequential Transfer in Multi-armed Bandit\\ with Finite Set of Models}
\author{Mohammad Gheshlaghi Azar \\  School of Computer Science\\ Carnegie Mellon University\\ \normalsize{\texttt{mazar@cs.cmu.edu} }\and Alessandro Lazaric \\ INRIA Lille - Nord Europe\\  Villeneuve d'Ascq, France \\\normalsize {\texttt{alessandro.lazaric@inria.fr}} \and Emma Brunskill  \\  School of Computer Science\\ Carnegie Mellon University\\ \normalsize{\texttt{ebrun@cs.cmu.edu} }}
\newcommand{\A}{\mathcal A}
\newcommand{\E}{\mathbb E}
\newcommand{\calE}{\mathcal E}
\newcommand{\Prob}{\mathbb P}
\newcommand{\hmu}{\hat\mu}
\newcommand{\R}{\mathcal{R}}
\newcommand{\eps}{\varepsilon}
\newcommand{\htheta}{\hat{\theta}}
\newcommand{\hi}{\hat{i}}
\newcommand{\btheta}{\bar{\theta}}
\newcommand{\tTheta}{\widetilde{\Theta}}
\newtheorem{lemma}{Lemma}
\newtheorem{corollary}{Corollary}
\newtheorem{definition}{Definition}
\newtheorem{theorem}{Theorem}
\begin{document}
\maketitle

\vspace{-0.1in}
\begin{abstract}
Learning from prior tasks and transferring that experience to improve future performance is critical for building lifelong learning agents. Although results in supervised and reinforcement learning show that transfer may significantly improve the learning performance, most of the literature on transfer is focused on batch learning tasks. In this paper we study the problem of \textit{sequential transfer in online learning}, notably in the multi-armed bandit framework, where the objective is to minimize the cumulative regret over a sequence of tasks by incrementally transferring knowledge from prior tasks. 
We introduce a novel bandit algorithm based on a method-of-moments approach for the estimation of the possible tasks and derive regret bounds for it. 
\end{abstract}


\vspace{-0.2in}
\section{Introduction}\label{s:introduction}
\vspace{-0.1in}

Learning from prior tasks and transferring that experience to improve future performance is a key aspect of intelligence, and is critical for building lifelong learning agents. 
Recently, multi-task and transfer learning received much attention in the supervised and reinforcement learning (RL) setting with both empirical and theoretical encouraging results \citep[see recent surveys by][]{pan2010a-survey,lazaric2011transfer2}. Most of these works focused on scenarios where the tasks are batch learning problems, in which a training set is directly provided to the learner. 
On the other hand, the online learning setting \citep{cesa2006prediction}, where the learner is presented with samples in a sequential fashion, has been rarely considered (see~\citet{Mann12Directed} for an example in RL and Sec.~\ref{s:related} in the supplementary material for a discussion on related settings). 

The multi--arm bandit (MAB)~\citep{robbins1952some} is a simple yet powerful framework formalizing the online learning with partial feedback problem, which encompasses a large number of applications, such as clinical trials, online advertisements and adaptive routing. 
In this paper we take a step towards understanding and providing formal bounds on transfer in stochastic MABs. We focus on a \textit{sequential transfer} scenario where an (online) learner is acting in a series of  tasks drawn from a stationary distribution over a finite set of MABs.
Prior to learning, the model parameters of each  bandit problem  are not known to the learner, nor does it know the distribution probability over the bandit problems. Also, we assume that the learner is not  provided with the identity of the task. This setting is sufficient to model a number of interesting problems,  including: a tutoring system working to help a sequence 
of students to learn, by finding the right type of education program for them, where each student may be a remedial, normal or honors  student but which is unknown; an online advertisement site that wishes to run a sequence of ads with maximum expected click  for a sequence of webpages based on the type of the users of each webpage, which is unknown to the system. 
%
%
 %
To act  efficiently in this setting,  it is crucial to define a mechanism for  transferring knowledge across tasks.   In fact
the learner may encounter the same bandit problem over and over throughout the learning, and
an efficient algorithm should be able to reuse (transfer) the knowledge obtained in previous tasks, when it is presented with the same problem again. This can be achieved by modeling the  reward distribution of the whole process as a latent variable model (LVM), where the observed variables are the rewards of pulling the arms and the  latent variable is the  identity of the bandit.  If we can accurately estimate this LVM, we show that an extension of the \textit{UCB} algorithm~\citep{auer2002finite-time} is able to exploit this prior knowledge   to reduce the regret through tasks (Sec.~\ref{s:mucb}). 

In this paper we rely on a new variant of  method-of-moments~\citep{AnandkumarFHKL12,AnandkumarHK12}, the robust tensor power method (RTP)~\citep{AnandkumarRHKT12}, to estimate the LVM associated with the sequential-bandit problem. \textit{RTP} relies on  decomposing the eigenvalues/eigenvectors of certain tensors for estimating the model means~\citep{AnandkumarRHKT12}. We prove that  \textit{RTP} provides  a consistent estimate of the means of all  arms for every bandit problem as long as they are pulled at least three times per task (Sec.~\ref{ss:mom}). This guarantees that once \textit{RTP} is paired with an efficient bandit algorithm able to exploit the transferred knowledge about the models (Sec.~\ref{ss:umucb.regret}), we obtain a bandit algorithm, called \textit{tUCB}, guaranteed to perform as well as \textit{UCB} in early episodes, thus avoiding any negative transfer effect, and then to approach the performance of the ideal case when the set of bandit problems is known in advance (Sec.~\ref{ss:tucb.regret}). Finally, we report some preliminary results on synthetic data confirming the theoretical findings (Sec.~\ref{s:experiments}).
%
%
%
\vspace{-0.1in}
\section{Preliminaries}\label{s:preliminaries}
\vspace{-0.1in}
We consider a stochastic MAB problem defined by a set of arms $\A = \{1,\ldots,K\}$, $|\A|=K$, where each $i\in\A$ is characterized by a distribution $\nu_i$  and the samples observed from each arm are independent and identically distributed. We focus on the setting where there exists a set of models $\Theta = \{\theta = (\nu_1,\ldots,\nu_K)\}$, $|\Theta|=m$, which contains all the possible bandit problems. We denote the mean of an arm $i$, the best arm, and the best value of a model $\theta\in\Theta$ respectively by $\mu_i(\theta)$, $i_*(\theta)$, $\mu_*(\theta)$. We define the arm gap of an arm $i$ for a model $\theta$ as $\Delta_i(\theta) = \mu_*(\theta) - \mu_i$, while the model gap for an arm $i$ between two models $\theta$ and $\theta'$ is defined as $\Gamma_i(\theta, \theta') = |\mu_i(\theta)-\mu_i(\theta')|$. 

We also introduce some tensor notation. Let $X\in\mathbb{R}^K$ be a random realization of all the arms from a random model. All the realizations are i.i.d. conditional on a model $\btheta$ and $\E[X| \theta=\btheta] = \mu(\theta)$, where the $i$-th component of $\mu(\theta)\in\mathbb{R}^K$ is $[\mu(\theta)]_i=\mu_i(\theta)$. Given two realizations $X^1$ and $X^2$, we define the second moment matrix $M_2 = \E[X^1 \otimes X^2]$ such that $[M_2]_{i,j} = \E[X_i^1 X_j^2]$ and the third moment tensor $M_3 = \E[X^1 \otimes X^2 \otimes X^3]$. Since the realizations are conditionally independent, we have that $\E[X^1\otimes X^2| \theta=\btheta] = \E[X^1|\theta=\btheta] \otimes \E[X^2|\theta=\btheta] = \mu(\theta) \otimes \mu(\theta)$ and this allows us to rewrite the second and third moments as $M_2 = \sum_{\theta} \rho(\theta) \mu(\theta)^{\otimes 2}, M_3 = \sum_{\theta} \rho(\theta) \mu(\theta)^{\otimes 3}$~\citep{AnandkumarHK12}, where ${v}^{\otimes p} = {v} \otimes {v} \otimes \cdots {v}$ is the $p$-th tensor power.
Let  $A$ be a $3^{\text{rd}}$ order member of the tensor product of the Euclidean space $\mathbb R^K$ (as $M_3$), then we define the multilinear map as follows. For a set of three matrices $\{V_i \in  \mathbb R^ {K\times m}\}_{1\leq i\leq 3}$ , the $(i_1,i_2,i_3)$ entry in the $3$-way  array representation of $A(V_1,V_2,V_3)\in\mathbb R^{m\times m\times m}$ is
$[A(V_1,V_2,V_3)]_{i_1,i_2,i_3}:=\sum_{1\leq j_1,j_2,j_3\leq n} A_{j_1,j_2,j_3}[V_1]_{j_1,i_1}[V_2]_{j_2,i_2}[V_3]_{j_3,i_3}$.
%
We also use different norms: the Euclidean norm $\|\cdot\|$; the Frobenius norm $\|\cdot\|_F$; the matrix max-norm $\|A\|_{\max}=\max_{ij}|[A]_{ij}|$.

We consider the sequential transfer setting where at each episode $j$ the learner interacts with a task $\btheta^j$, drawn from a distribution $\rho$ over $\Theta$, for $n$ steps. The objective is to minimize the (pseudo-)regret over $J$ episodes measured as the difference between the rewards obtained by the optimal arms $i_*(\btheta^j)$ and the rewards achieved by the learner. More formally, the regret is defined as
\begin{equation}\label{eq:global.regret}
\R_J = \sideset{}{_{j=1}^J }{\sum}\R^j_n = \sideset{}{_{j=1}^J}\sum \sideset{}{_{i\neq i^*}}\sum T_{i,n}^j \Delta_i(\btheta^j),
\end{equation}
where $T_{i,n}^j$ is the number of pulls to arm $i$ after $n$ steps of episode $j$. The only information available to the learner is the number of models $m$,  number of episodes $J$ and  number of steps $n$ per task. 
%
%
\vspace{-0.1in}
\section{Mult-armed Bandit with Finite Models}\label{s:mucb}
\vspace{-0.1in}
%
%

%
%
%
\begin{wrapfigure}{r}{0.5\linewidth}
\vspace{-0.25in}
\begin{minipage}[t]{1.0\linewidth}
{\bookboxx{
\begin{algorithmic}
\REQUIRE Set of models $\Theta$, number of steps $n$
\FOR{$t = 1,\ldots,n$}
\STATE Build $\Theta_t = \{\theta:\forall i, |\mu_i(\theta) - \hmu_{i,t}| \leq \eps_{i,t}\}$
\STATE Select $\theta_t = \arg\max_{\theta\in\Theta_t} \mu_*(\theta)$
\STATE Pull arm $I_t = i_*(\theta_t)$
\STATE Observe sample $x_{I_t}$ and update
\ENDFOR
\end{algorithmic}
}}
\par\vspace{-0.15in}
\caption{The \textit{mUCB} algorithm.}
\label{f:mucb}
\par\vspace{-0.15in}
\end{minipage}
\end{wrapfigure}
Before considering the transfer problem, we show that a simple variation to \textit{UCB} allows to effectively exploit the knowledge of $\Theta$ and obtain a significant reduction in the regret. The \textit{mUCB} (model-UCB) algorithm in Fig.~\ref{f:mucb} takes as input a set of models $\Theta$ including the current (unknown) model $\btheta$. At each step $t$, the algorithm computes a subset $\Theta_t \subseteq \Theta$ containing only the models whose means $\mu_i(\theta)$ are \textit{compatible} with the current estimates $\hmu_{i,t}$ of the means $\mu_i(\btheta)$ of the current model, obtained averaging $T_{i,t}$ pulls, and their uncertainty $\eps_{i,t}$ (see Eq.~\ref{eq:eps.def} for an explicit definition of this term). Notice that it is enough that one arm does not satisfy the compatibility condition to discard a model $\theta$.
Among all the models in $\Theta_t$, \textit{mUCB} first selects the model with the largest optimal value and then it pulls its corresponding optimal arm. This choice is coherent with the \textit{optimism in the face of uncertainty} principle used in UCB-based algorithms, since \textit{mUCB} always pulls the optimal arm corresponding to the optimistic model compatible with the current estimates $\hmu_{i,t}$. We show that \textit{mUCB} incurs a regret which is never worse than \textit{UCB} and it is often significantly smaller.
%
%
%

We denote the set of arms which are optimal for at least a model in a set $\Theta'$ as $\A_*(\Theta') = \{i\in\A: \exists \theta\in\Theta': i_*(\theta)=i\}$. The set of models for which the arms in $\A'$ are optimal is $\Theta(\A') = \{\theta\in\Theta: \exists i\in\A': i_*(\theta)=i\}$. The set of optimistic models for a given model $\btheta$ is $\Theta_+ = \{\theta\in\Theta: \mu_*(\theta)\geq \mu_*(\btheta)\}$, and their corresponding optimal arms $\A_+ = \A_*(\Theta_+)$. 
The following theorem bounds the expected regret (similar bounds hold in high probability). The lemmas and proofs (using standard tools from the bandit literature) are available in Sec.~\ref{app:proofs.mucb} of the supplementary material.
\begin{theorem}\label{thm:mucb.regret}
If \textit{mUCB} is run with $\delta=1/n$, a set of $m$ models $\Theta$ such that the $\btheta\in\Theta$ and 
%
\begin{align}\label{eq:eps.def}
\eps_{i,t} = \sqrt{\log(mn^2/\delta)/(2T_{i,t-1})},
\end{align}
%
where $T_{i,t-1}$ is the number of pulls to arm $i$ at the beginning of step $t$, then its expected regret is
\begin{align}\label{eq:mucb.regret}
\E[\R_n] \leq K +\sideset{}{_{i\in \A_+}} \sum  \frac{2\Delta_i(\btheta)\log \big(mn^3\big)}{\min_{\theta\in\Theta_{+,i}}\Gamma_i(\theta,\btheta)^2}  \leq K +\sideset{}{ _{i\in \A_+}}\sum  \frac{2\log \big(m n^3\big)}{\min_{\theta\in\Theta_{+,i}}\Gamma_i(\theta,\btheta)},
\end{align}
where $\A_+=\A_*(\Theta_+)$ is the set of arms which are optimal for at least one optimistic model $\Theta_+$ and $\Theta_{+,i} = \{\theta\in\Theta_+: i_*(\theta)=i\}$ is the set of optimistic models for which $i$ is the optimal arm.
\end{theorem}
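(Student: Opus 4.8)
The plan is to follow the standard \ucb regret decomposition, controlling the expected number of pulls $\E[T_{i,n}]$ of each arm separately, but with two structural refinements coming from the knowledge of $\Theta$. First I would introduce the \emph{good event} $\calE$ on which all the confidence intervals hold simultaneously, i.e.\ $|\hmu_{i,t}-\mu_i(\btheta)|\leq\eps_{i,t}$ for every arm $i$ and every step $t$. A Hoeffding bound at a fixed number of samples, together with a union bound over the $K$ arms and the at most $n$ possible values of $T_{i,t-1}$, shows under the choice $\eps_{i,t}=\sqrt{\log(mn^2/\delta)/(2T_{i,t-1})}$ and $\delta=1/n$ that $\Prob(\calE^c)$ is polynomially small in $n$, so that the contribution of $\calE^c$ to $\E[\R_n]$, bounded by $n\,\Prob(\calE^c)$, is $O(1)$ and can be absorbed into the additive constant. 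It then remains to bound the pulls on $\calE$.

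The crux is the following two-step observation, which exploits optimism and the choice of $\eps_{i,t}$ twice. On $\calE$ the true model $\btheta$ is compatible with the estimates, so $\btheta\in\Theta_t$ at every step; since \textit{mUCB} selects $\theta_t=\arg\max_{\theta\in\Theta_t}\mu_*(\theta)$, we get $\mu_*(\theta_t)\geq\mu_*(\btheta)$, i.e.\ $\theta_t\in\Theta_+$, and therefore the pulled arm $I_t=i_*(\theta_t)$ always lies in $\A_+=\A_*(\Theta_+)$. Hence on $\calE$ no arm outside $\A_+$ is ever pulled. Next, fix $i\in\A_+$ with $i\neq i_*(\btheta)$ and suppose it is pulled at step $t$: then some $\theta\in\Theta_{+,i}$ belongs to $\Theta_t$, so $|\mu_i(\theta)-\hmu_{i,t}|\leq\eps_{i,t}$, and combining this with the good-event inequality $|\hmu_{i,t}-\mu_i(\btheta)|\leq\eps_{i,t}$ through the triangle inequality yields $\Gamma_i(\theta,\btheta)\leq2\eps_{i,t}$, hence $\min_{\theta\in\Theta_{+,i}}\Gamma_i(\theta,\btheta)\leq2\eps_{i,t}$. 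Substituting the explicit form of $\eps_{i,t}$ and solving for $T_{i,t-1}$ shows that arm $i$ can be pulled only while
$$T_{i,t-1}\leq\frac{2\log(mn^3)}{\min_{\theta\in\Theta_{+,i}}\Gamma_i(\theta,\btheta)^2},$$
so on $\calE$ its total number of pulls exceeds this threshold by at most one.

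Assembling the pieces gives $\E[\R_n]=\sum_{i\neq i_*(\btheta)}\Delta_i(\btheta)\,\E[T_{i,n}]$; the bad event contributes $O(1)$, the ``$+1$'' rounding term summed over $i\in\A_+$ contributes at most $\sum_{i\in\A_+}\Delta_i(\btheta)\leq K$ (rewards lie in $[0,1]$), and the threshold above produces the main sum, giving the first inequality in~\eqref{eq:mucb.regret}. The second inequality follows from a purely deterministic comparison of gaps: for any $\theta\in\Theta_{+,i}$ we have $i_*(\theta)=i$ and $\mu_*(\theta)\geq\mu_*(\btheta)$, hence $\Gamma_i(\theta,\btheta)=\mu_i(\theta)-\mu_i(\btheta)=\mu_*(\theta)-\mu_i(\btheta)\geq\mu_*(\btheta)-\mu_i(\btheta)=\Delta_i(\btheta)$; taking the minimum over $\theta$ yields $\Delta_i(\btheta)\leq\min_{\theta\in\Theta_{+,i}}\Gamma_i(\theta,\btheta)$, so dividing one power of this gap out of each term converts $\Delta_i/\Gamma_i^2$ into $1/\Gamma_i$.

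I expect the genuinely delicate step to be the optimism-plus-compatibility argument rather than the concentration bookkeeping. One must argue carefully that pulling arm $i$ forces some \emph{optimistic} model (not merely some compatible one) optimal at $i$ to survive in $\Theta_t$, which is exactly what restricts the relevant gap to $\min_{\theta\in\Theta_{+,i}}\Gamma_i(\theta,\btheta)$ and prevents a weaker bound ranging over all models optimal at $i$. Verifying that the single radius $\eps_{i,t}$ simultaneously certifies $\btheta\in\Theta_t$ (for optimism) and eliminates the incompatible optimistic models (for the pull bound) is the point most likely to hide a factor-of-two or union-bound subtlety.
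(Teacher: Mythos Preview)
Your proposal is correct and follows essentially the same route as the paper: define the good event $\calE$, argue that on $\calE$ the true model survives so only models in $\Theta_+$ (hence only arms in $\A_+$) are ever selected, then use compatibility plus the triangle inequality to get $\Gamma_i(\theta,\btheta)\leq 2\eps_{i,t}$ whenever $i$ is pulled, and finish with the deterministic inequality $\Gamma_i(\theta,\btheta)\geq\Delta_i(\btheta)$ for $\theta\in\Theta_{+,i}$. The only cosmetic difference is in the union-bound bookkeeping: the paper takes the union over time steps, over arms in $\A_*(\Theta)$ (using $|\A_*(\Theta)|\leq m$ rather than $K$), and over values of $T_{i,t-1}$, which is how the factor $mn^2$ in $\eps_{i,t}$ arises; your single union over arms and sample counts is actually tighter but would produce a slightly different constant inside the logarithm.
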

\textbf{Remark (comparison to \textit{UCB}).}
The \textit{UCB} algorithm incurs a regret
\begin{align*}
\E[\R_n(\text{UCB})] \leq O\Big(\sideset{}{_{i\in\A}}{\sum}\frac{\log n}{\Delta_i(\btheta)}\Big) \leq O\Big(K\frac{\log n}{\min_i\Delta_i(\btheta)}\Big).
\end{align*}
%
%
We see that \textit{mUCB} displays two major improvements. The regret in Eq.~\ref{eq:mucb.regret} can be written as
\begin{align*}
\E[\R_n(\text{mUCB})] \leq O\Big(\sideset{}{_{i\in\A_+}}{\sum}\frac{\log n}{\min_{\theta\in\Theta_{+,i}}\Gamma_i(\theta,\btheta)}\Big) \leq O\Big(|\A_+|\frac{\log n}{\min_i \min_{\theta\in\Theta_{+,i}}\Gamma_i(\theta,\btheta)}\Big).
\end{align*}
%
This result suggests that \textit{mUCB} tends to discard all the models in $\Theta_+$ from the most optimistic down to the actual model $\btheta$ which, with high-probability, is never discarded. As a result, even if other models are still in $\Theta_t$, the optimal arm of $\btheta$ is pulled until the end. This significantly reduces the set of arms which are actually pulled by \textit{mUCB} and the previous bound only depend on the number of arms in $\A_+$, which is $|\A_+| \leq |\A_*(\Theta)| \leq K$. Furthermore, it is possible to show that for all arms $i$, the minimum gap $\min_{\theta\in\Theta_{+,i}}\Gamma_i(\theta,\btheta)$ is guaranteed to be larger than the arm gap $\Delta_i(\btheta)$ (see Lem.~\ref{l:gaps} in Sec.~\ref{app:proofs.mucb}), thus further improving the performance of \textit{mUCB} w.r.t. \textit{UCB}.
%
%
%
\vspace{-0.1in}
\section{Online Transfer with Unknown Models}\label{s:tucb}
\vspace{-0.1in}
We now consider the case when the set of models is unknown and the regret is cumulated over multiple tasks drawn from $\rho$ (Eq.~\ref{eq:global.regret}). We introduce \textit{tUCB} (transfer-UCB) which transfers estimates of $\Theta$, whose accuracy is improved through episodes using a method-of-moments approach. 
%
%
%
%
\vspace{-0.1in}
\subsection{The transfer-UCB Bandit Algorithm}\label{ss:tucb}
\vspace{-0.05in}
\begin{figure}[ht]
\begin{minipage}[b]{0.4\linewidth}
{\bookboxx{
\begin{algorithmic}
\REQUIRE number of arms $K$, number of models $m$, constant $C(\theta)$.
\STATE \textbf{Initialize} estimated models $\Theta^1=\{\hmu_i^1(\theta)\}_{i,\theta}$, samples $R\in\mathbb{R}^{J\times K\times n}$
\FOR{$j=1,2,\ldots,J$}
\STATE Run $R^j = \text{umUCB}(\Theta^j, n)$
\STATE Run $\Theta^{j+1} = \text{RTP}(R, m, K, j,\delta)$
\ENDFOR
\end{algorithmic} 
}}
\vspace{0.15in}
\caption{The \textit{tUCB} algorithm. }
\label{alg:mucb-mom} 
\par\vspace{-0.2in}
\end{minipage}
\hspace{0.1in}
\begin{minipage}[b]{0.58\linewidth}
\centering
{\bookboxx{
\begin{algorithmic}
\REQUIRE set of models $\Theta^j$, num. steps $n$
\STATE Pull each arm three times
\FOR{$t = 3K+1,\ldots,n$}
\STATE Build $\Theta_t^j = \{\theta: \forall i, |\hmu_i^j(\theta) - \hmu_{i,t}| \leq \eps_{i,t} + \eps^j \}$
\STATE Compute $B_t^j(i; \theta) = \min \big\{(\hmu^{j}_{i}(\theta) + \eps^j), (\hmu_{i,t} + \eps_{i,t})\big\}$
\STATE Compute $\theta_t^j = \arg\max_{\theta\in\Theta_t^j} \max_i B_t^j(i; \theta)$
\STATE Pull arm $I_t = \arg\max_i B_t^j(i; \theta_t^j)$
\STATE Observe sample $R(I_t, T_{i,t}) = x_{I_t}$ and update
\ENDFOR
\RETURN Samples $R$
\end{algorithmic}
}}
\vspace{-0.15in}
\caption{The \textit{umUCB} algorithm.}
\label{f:umUCB}
\par\vspace{-0.2in}
\end{minipage}
\end{figure}

\begin{figure}[ht]
\begin{minipage}[b]{1.0\linewidth}
\centering
{\bookboxx{
\begin{algorithmic}
\REQUIRE samples $R \in \mathbb{R}^{j\times n}$, number of   models $m$ and arms $K$, episode $j$
\STATE Estimate the second and third moment $\widehat M_2$ and $\widehat M_3$ using the reward samples from $R$ (Eq.~\ref{eq:emp.mom})
\STATE Compute $\widehat D \in\mathbb{R}^{m\times m}$ and $\widehat U\in\mathbb{R}^{K \times m}$ ($m$ largest eigenvalues and eigenvectors of $\widehat M_2$ resp.)
\STATE Compute the whitening mapping $\widehat W=\widehat U \widehat D^{-1/2}$ and the tensor $\widehat T=\widehat M_3(\widehat W,\widehat W, \widehat W)$
\STATE Plug $\widehat T$ in Alg.~1 of \citet{AnandkumarRHKT12} and compute eigen-vectors/values $\{\widehat v(\theta)\}$, $\{\widehat \lambda(\theta)\}$
\STATE Compute $\widehat \mu^{j}(\theta)=\widehat \lambda(\theta)(\widehat W^{\mathsf{T}} )^+\widehat v(\theta)$ for all $\theta\in\Theta$
\RETURN $\Theta^{j+1}=\{\widehat \mu^{j}(\theta):\theta\in\Theta\}$
\end{algorithmic}
}}
\vspace{-0.15in}
\caption{The robust tensor power (\textit{RTP}) method.}
\label{f:MoM}
\par\vspace{-0.2in}
\end{minipage}
\end{figure}
Fig.~\ref{alg:mucb-mom} outlines the structure of our online transfer bandit algorithm \textit{tUCB} (transfer-UCB). The algorithm uses two sub-algorithms, the bandit algorithm \textit{umUCB} (\textit{uncertain model-UCB}), whose objective is to minimize the regret at each episode, and \textit{RTP} (\textit{robust tensor power} method) which at each episode $j$ computes an estimate $\{\hmu_i^j(\theta)\}$ of the arm means of all the models. The bandit algorithm \textit{umUCB} in Fig.~\ref{f:umUCB} is an extension of the \textit{mUCB} algorithm. It first computes a set of models $\Theta_t^j$ whose means $\hmu_i(\theta)$ are compatible with the current estimates $\hmu_{i,t}$. However, unlike the case where the exact models are available, here the models themselves are estimated and the uncertainty $\eps^j$ in their means (provided as input to \textit{umUCB}) is taken into account in the definition of $\Theta_t^j$. Once the active set is computed, the algorithm computes an upper-confidence bound on the value of each arm $i$ for each model $\theta$ and returns the best arm for the most optimistic model. Unlike in \textit{mUCB}, due to the uncertainty over the model estimates, a model $\theta$ might have more than one optimal arm, and an upper-confidence bound on the mean of the arms $\hmu_i(\theta)+\eps^j$ is used together with the upper-confidence bound $\hmu_{i,t}+\eps_{i,t}$, which is directly derived from the samples observed so far from arm $i$. This guarantees that the $B$-values are always consistent with the samples generated from the actual model $\btheta^j$. Once \textit{umUCB} terminates, \textit{RTP} (Fig.~\ref{f:MoM}) updates the estimates of the model means $\widehat \mu^j(\theta)=\{\hmu^j_i(\theta)\}_i \in \mathbb{R}^K$ using the  samples obtained from each arm $i$. At the beginning of each task \textit{umUCB} pulls all the arms $3$ times, since \textit{RTP} needs at least $3$ samples from each arm to accurately estimate the $2^{\text{nd}}$ and $3^{\text{rd}}$ moments~\citep{AnandkumarRHKT12}.  More precisely, \textit{RTP} uses all the reward samples generated up to episode $j$ to estimate the $2^{\text{nd}}$ and $3^{\text{rd}}$ moments (see Sec.~\ref{s:preliminaries}) as
%
\begin{equation}
\label{eq:emp.mom}
\widehat M_2=j^{-1}\sideset{}{_{l=1}^j}{ \sum}\overline \mu_{1l}\otimes \overline \mu_{2l},\qquad\text{and}\qquad \widehat M_3= j^{-1}\sideset{}{_{l=1}^j}\sum\overline \mu_{1l}\otimes \overline \mu_{2l}\otimes \overline \mu_{3l},
\end{equation}
%
where the vectors $\overline \mu_{1l},\overline \mu_{2l},\overline \mu_{3l}\in \mathbb{R}^K$ are obtained by dividing the $T_{i,n}^l$ samples observed from arm $i$ in episode $l$ in three batches and taking their average (e.g., $[\overline \mu_{1l}]_i$ is the average of the first $T_{i,n}^l/3$ samples).\footnote{Notice that $1/3([\overline \mu_{1l}]_i + [\overline \mu_{2l}]_i + [\overline \mu_{1l}]_i) = \hmu_{i,n}^l$, the empirical mean of arm $i$ at the end of episode $l$.} 
Since $\overline \mu_{1l},\overline \mu_{2l},\overline \mu_{3l}$  are independent estimates of $\mu(\btheta^l)$, $\widehat M_2$ and $\widehat M_3$ are consistent estimates of the second and third moments $M_2$ and $M_3$. \textit{RTP} relies on the  fact that  the model  means $\mu(\theta)$ can be recovered from the spectral decomposition of the symmetric tensor $T=M_3(W,W,W)$, where $W$ is a whitening matrix  for $M_2$, i.e., $M_2(W,W)=\mathbf{I}^{m\times m}$ (see Sec.~\ref{s:preliminaries} for the definition of the mapping $A(V_1,V_2,V_3)$).  \citet{AnandkumarRHKT12} (Thm. 4.3) have shown that  under some  mild assumption (see later Assumption \ref{asm:ndegen}) the model means $\{\mu(\theta)\}$, can be obtained as
$\mu(\theta)=\lambda(\theta) B v(\theta)$,
where  $(\lambda(\theta),v(\theta))$ is a pair of eigenvector/eigenvalue for the tensor $T$ and $B:=({W^{\mathsf{T}}})^+$.
Thus the \textit{RTP} algorithm  estimates the eigenvectors $\widehat v(\theta)$ and the  eigenvalues $\widehat \lambda(\theta)$,  of  the $m\times m\times m$ tensor $\widehat T:=\widehat M_3(\widehat W, \widehat W, \widehat W)$.\footnote{The matrix $\widehat W\in\mathbb{R}^{K\times m}$ is such that $\widehat M_2(\widehat W,\widehat W)=\mathbf{I}^{m\times m}$, i.e., $\widehat W$ is the whitening matrix of  $\widehat M_2$. In general $\widehat W$ is not unique. Here, we choose $\widehat W=\widehat U \widehat D^{-1/2}$, where $\widehat D\in\mathbb{R}^{m\times m}$ is a diagonal matrix consisting of the  $m$ largest eigenvalues of $\widehat M_2$ and $\widehat U\in\mathbb{R}^{K\times m}$ has the corresponding eigenvectors as its columns.}   Once $\widehat v (\theta)$ and   $\widehat \lambda (\theta)$ are computed, the estimated mean vector $\widehat \mu^j(\theta)$ is obtained 
by the inverse transformation $\widehat \mu^j(\theta)=\widehat \lambda (\theta)\widehat B \widehat v(\theta)$, where $\widehat B$ is the pseudo inverse of $\widehat W^{\mathsf{T}}$\citep[for a detailed description of RTP algorithm see][]{AnandkumarRHKT12}.  
%
%

%
\vspace{-0.05in}
\subsection{Sample Complexity of the Robust Tensor Power Method}\label{ss:mom}
\vspace{-0.05in}
\textit{umUCB} requires as input $\eps^j$, i.e., the uncertainty of the model estimates. Therefore we need finite sample complexity bounds on the accuracy of $\{\hmu_i(\theta)\}$ computed by \textit{RTP}.
The performance of \textit{RTP} is directly affected by the error of the estimates $\widehat{M_2}$ and $\widehat{M_3}$ w.r.t. the true moments.
In Thm.~\ref{thm:mom} we  prove that, as the number of tasks  $j$ grows, this error  rapidly decreases  with the rate of $\sqrt{1/j}$. This result provides us with an upper-bound on the error $\varepsilon^j$ needed for building the confidence intervals in \textit{umUCB}.  
The following definition and assumption are required for our result.
\begin{definition}
Let $\Sigma_{M_2}=\{\sigma_1,\sigma_2,\dots,\sigma_m\}$  be the set of $m$ largest  eigenvalues of the matrix $M_2$. Define $\sigma_{\min}:=\min_{\sigma\in\Sigma_{M_2}}\sigma$,  $\sigma_{\max}:=\min_{\sigma\in\Sigma_{M_2}}\sigma$ and  $\lambda_{\max}:=\max_{\theta}\lambda(\theta)$.  Define the minimum gap between the distinct eigenvalues of $M_2$ as 
$\Gamma_{\sigma}:=\min_{\sigma_i\neq\sigma_l }(|\sigma_i-\sigma_l|)$.
 %
\end{definition}
%
%
\begin{asm}
\label{asm:ndegen}
The mean vectors $\{\mu(\theta)\}_\theta$ are linear independent  and $\rho(\theta)>0$ for all $\theta\in\Theta$. 
\end{asm}
We now state our main result which is in the form of a high probability bound on  the estimation error of mean reward vector of every model $\theta\in\Theta$.
\begin{theorem}\label{thm:mom}
Pick $\delta\in(0,1)$. Let $C(\Theta):=C_3\lambda_{\max}\sqrt{\frac{\sigma_{\max}} {\sigma_{\min}^{3}}} \left(\frac{\sigma_{\max}}{\Gamma_{\sigma}}+\frac{1}{\sigma_{\min}}+\frac{1}{\sigma_{\max}}\right)$, where $C_3>0$ is a universal constant. Then under Assumption \ref{asm:ndegen} there exist constants  $C_4>0$  and a permutation $\pi$ on $\Theta$ such that after $j$ tasks
%
\begin{equation*}\label{eq:mom.bound}
\begin{aligned}
\max_{\theta}\|\mu(\theta)-\widehat \mu^j(\pi(\theta))\|\leq C(\Theta)K^{2.5}m^2\sqrt{\frac{\log(K/\delta)}{j}},
\end{aligned}
\end{equation*}
%
w.p. $ 1-\delta$, given that 
%
%
\begin{equation}
\label{eq:bound.min.samp} 
j\geq \tfrac{C_4 m^5 K^6 \log(K/\delta)}{\min(\sigma_{\min},\Gamma_{\sigma})^2\sigma^3_{\min}\lambda^2_{\min}}.
\end{equation}
%
%
\end{theorem}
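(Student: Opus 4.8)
The plan is to combine a concentration argument for the empirical moments with the perturbation analysis of the robust tensor power method from \citet{AnandkumarRHKT12}. Since the true means are recovered \emph{exactly} from the spectral decomposition of $T=M_3(W,W,W)$ via $\mu(\theta)=\lambda(\theta)Bv(\theta)$, the entire error in $\widehat{\mu}^j(\theta)$ can be traced back to the deviation of $\widehat{M_2},\widehat{M_3}$ from $M_2,M_3$. The bound will then follow by propagating this deviation through the whitening step, the tensor decomposition, and the final inverse transformation, tracking at each stage both the $\sqrt{1/j}$ rate and the spectral quantities that make up $C(\Theta)$.

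First I would establish the concentration of the empirical moments. The key observation is that, conditioned on the task $\btheta^l$ and on the (random) pull counts, the three batch averages $\overline{\mu}_{1l},\overline{\mu}_{2l},\overline{\mu}_{3l}$ are \emph{independent} unbiased estimates of $\mu(\btheta^l)$, so $\E[\overline{\mu}_{1l}\otimes\overline{\mu}_{2l}\mid\btheta^l]=\mu(\btheta^l)^{\otimes 2}$ and likewise for the third-order product; taking a further expectation over $\btheta^l\sim\rho$ shows $\widehat{M_2},\widehat{M_3}$ are unbiased for $M_2,M_3$. Because the episodes are i.i.d.\ and the rewards are bounded, a Bernstein/Hoeffding argument over the $j$ episodes yields, with probability $1-\delta$, bounds $\|\widehat{M_2}-M_2\|$ and $\|\widehat{M_3}-M_3\|$ of order $\sqrt{\log(K/\delta)/j}$ up to polynomial factors in $K$. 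I expect this step to be the main obstacle: one must handle the fact that the pull counts $T^l_{i,n}$ are random and history-dependent (so the batch sizes are not fixed) while still exploiting the conditional independence of the three batches, and one must carefully track the powers of $K$ that eventually accumulate into the final $K^{2.5}$ factor.

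Next I would propagate these moment errors through the whitening. Under Assumption~\ref{asm:ndegen} the matrix $M_2$ has exactly $m$ strictly positive eigenvalues, so $W$ is well defined; applying a Weyl/Davis--Kahan perturbation bound to its top-$m$ eigenspace — where the gap $\Gamma_\sigma$ and the smallest eigenvalue $\sigma_{\min}$ enter — I would bound $\|\widehat{W}-W\|$ (up to rotation) in terms of $\|\widehat{M_2}-M_2\|$, with the characteristic dependence on $\sigma_{\min}^{-1/2}$ and $\Gamma_\sigma^{-1}$ appearing in $C(\Theta)$. Writing $\widehat{T}-T=(\widehat{M_3}-M_3)(\widehat{W},\widehat{W},\widehat{W})+[M_3(\widehat{W},\widehat{W},\widehat{W})-M_3(W,W,W)]$ and expanding the second term as a telescoping sum over the three arguments, I would bound $\|\widehat{T}-T\|$ by a combination of $\|\widehat{M_3}-M_3\|\,\|\widehat{W}\|^3$ and $\|M_3\|\,\|W\|^2\,\|\widehat{W}-W\|$, giving the total perturbation of the symmetric tensor, again scaling as $\sqrt{1/j}$.

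Finally I would invoke the robustness guarantee of \citet{AnandkumarRHKT12} (Thm.~5.1): provided $\|\widehat{T}-T\|$ lies below a threshold set by the tensor eigenvalue gaps — which, once the above bounds are substituted, is exactly the requirement that $j$ exceed the quantity in Eq.~\ref{eq:bound.min.samp} — the RTP output satisfies $\|\widehat{v}(\pi(\theta))-v(\theta)\|=O(\|\widehat{T}-T\|)$ and $|\widehat{\lambda}(\pi(\theta))-\lambda(\theta)|=O(\|\widehat{T}-T\|)$ for some permutation $\pi$. It then remains to pass back from the whitened eigenpairs to the means via $\widehat{\mu}^j(\theta)=\widehat{\lambda}(\theta)\widehat{B}\widehat{v}(\theta)$ with $\widehat{B}=(\widehat{W}^{\mathsf{T}})^+$. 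Decomposing $\mu(\theta)-\widehat{\mu}^j(\pi(\theta))$ by the triangle inequality into the contributions of $\widehat{\lambda}-\lambda$, $\widehat{B}-B$ and $\widehat{v}-v$, controlling $\|\widehat{B}-B\|$ through $\|\widehat{W}-W\|$ and $\sigma_{\min}$, and collecting all spectral factors into $C(\Theta)=C_3\lambda_{\max}\sqrt{\sigma_{\max}/\sigma_{\min}^{3}}\,(\sigma_{\max}/\Gamma_\sigma+1/\sigma_{\min}+1/\sigma_{\max})$ together with the accumulated $K^{2.5}m^2$ dimensional factors, yields the stated high-probability bound and completes the proof.
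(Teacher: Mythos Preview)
Your proposal is correct and follows essentially the same route as the paper: concentrate $\widehat{M}_2,\widehat{M}_3$ around $M_2,M_3$, push the error through the whitening via Weyl/eigenvector perturbation bounds (the paper uses Wedin where you cite Davis--Kahan), telescope to control $\|\widehat{T}-T\|$, invoke the RTP perturbation theorem of \citet{AnandkumarRHKT12}, and finally decompose $\mu(\theta)-\widehat{\mu}^j(\pi(\theta))$ into the $\widehat{\lambda}-\lambda$, $\widehat{B}-B$, $\widehat{v}-v$ contributions. The only substantive refinement is in your first step: because the pull counts and batch sizes in episode $l$ depend on the estimates produced from episodes $1,\ldots,l-1$, the terms $\overline{\mu}_{1l}\otimes\overline{\mu}_{2l}-M_2$ are not i.i.d.\ across episodes, and the paper accordingly applies Azuma's inequality to the resulting martingale-difference sequence rather than a plain Hoeffding/Bernstein bound---exactly the obstacle you anticipated.
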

\begin{remark}{(comparison with the previous bounds)}
This bound improves on the previous bounds of~\citet{AnandkumarHK12,AnandkumarFHKL12} moving from a dependency on the number of models of order $O(m^5)$ to a milder quadratic dependency on $m$.\footnote{Note that the improvement is mainly due to accuracy of the orthogonal tensor decomposition  obtained via the tensor power method relative
to the previously cited works. This is a direct consequence  of  the perturbation bound of \citet[Thm. 5.1]{AnandkumarRHKT12}, which is at the core of our sample complexity bound.}~\footnote{The result of \citet{AnandkumarFHKL12} has the explicit dependency of order $m^3$ on the number of model as well as implicit dependency of order $m^2$ through the parameter $\alpha_0$.}  Although the dependency on $\sigma_{\min}$ is a bit worse in our bounds in comparison to those of  \citet{AnandkumarHK12,AnandkumarFHKL12}, here we have the advantage that  there is no  dependency on the smallest singular value of the matrix $\{\mu(\theta):\theta\in\Theta\}$, whereas those results  scale polynomially with this factor. 
\end{remark}

\begin{remark}{(computation of $C(\Theta)$)}
As illustrated in Fig.~\ref{f:umUCB}, \textit{umUCB} relies on the estimates $\widehat \mu^j(\theta)$ and on their accuracy $\eps^j$. Although the bound reported in Thm.~\ref{thm:mom} provides an upper confidence bound on the error of the estimates, it contains terms which are not computable in general (e.g., $\sigma_{\min}$). In practice, $C(\Theta)$ should be considered as a parameter of the algorithm.\footnote{One  may also estimate the constant $C(\Theta)$ in an online fashion using doubling trick~\citep{audibert201216}.} This is not dissimilar from to the parameter usually introduced in the definition of $\eps_{i,t}$ in front of the square-root term in \textit{UCB}. 
\end{remark}
%
%
%

\vspace{-0.05in}
\subsection{Regret Analysis of \textit{umUCB}}\label{ss:umucb.regret}
\vspace{-0.05in}

We now analyze the regret of \textit{umUCB} when an estimated set of models $\Theta^j$ is provided as input.
At episode $j$, for each model $\theta$ we define the set of non-dominated arms (i.e., potentially optimal arms) as $\A_*^j(\theta) = \{i\in\A: \nexists i', \hmu_i^j(\theta)+\eps^j < \hmu_{i'}^j(\theta)-\eps^j\}$. Among the non-dominated arms, when the actual model is $\btheta^j$, the set of optimistic arms is $\A_+^j(\theta; \btheta^j) = \{i\in\A_*^j(\theta): \hmu_i^j(\theta)+\eps^j \geq \mu^*(\btheta^j) \}$. As a result, the set of optimistic models is $\Theta_+^j(\btheta^j) = \{\theta\in\Theta: \A_+^j(\theta; \btheta^j)\neq \emptyset\}$. In some cases, because of the uncertainty in the model estimates, unlike in \textit{mUCB}, not all the models $\theta\neq \btheta^j$ can be discarded, not even at the end of a very long episode. Among the optimistic models, the set of models that cannot be discarded is defined as $\tTheta_+^j(\btheta^j) = \{\theta\in\Theta_+^j(\btheta^j): \forall i\in\A_+^j(\theta;\btheta^j), |\hmu_i^j(\theta)-\mu_i(\btheta^j)| \leq \eps^j\}$.
%
%
Finally, when we want to apply the previous definitions to a set of models $\Theta'$ instead of single model we have, e.g.,  $\A_*^j(\Theta'; \btheta^j) = \bigcup_{\theta\in\Theta'} \A_*^j(\theta; \btheta^j)$. 

The proof of the following results are available in Sec.~\ref{app:proofs.umucb} of the supplementary material, here we only report the number of pulls, and the corresponding regret bound.

\begin{corollary}\label{l:umucb.pulls}
If at episode $j$ \textit{umUCB} is run with $\eps_{i,t}$ as in Eq.~\ref{eq:eps.def} and $\eps^j$ as in Eq.~\ref{eq:mom.bound} with a parameter $\delta'=\delta/2K$,
then for any arm $i\in\A$, $i\neq i_*(\btheta^j)$ is pulled $T_{i,n}$ times such that
\begin{small}
\begin{align*}
\begin{dcases} 
T_{i,n} \leq \min\bigg\{ \frac{2\log \big(2mKn^2/\delta\big)}{\Delta_i(\btheta^j)^2} , \frac{\log \big(2mKn^2/\delta\big)}{2\sideset{}{_{\theta\in\Theta_{i,+}^j(\btheta^j)}}{\min\limits}\widehat\Gamma_i(\theta; \btheta^j)^2} \bigg\}+1 &\mbox{if } i\in\A_1^j \\
T_{i,n} \leq  2\log \big(2mKn^2/\delta\big)/(\Delta_i(\btheta^j)^2)+1 &\mbox{if } i\in\A_2^j \\
T_{i,n} = 0 &\mbox{otherwise}
\end{dcases}
\end{align*}
\end{small}
w.p. $1-\delta$, where $\Theta_{i,+}^j(\btheta^j) = \{\theta\in\Theta_+^j(\btheta^j)\!:\! i\in\A_+(\theta; \btheta^j)\}$ is the set of models for which $i$ is among theirs optimistic non-dominated arms, $\widehat\Gamma_i(\theta; \btheta^j) = \Gamma_i(\theta, \btheta^j)/2 - \eps^j$, $\A^j_1 = \A_+^j(\Theta_+^j(\btheta^j); \btheta^j) - \A_+^j(\tTheta_+^j(\btheta^j); \btheta^j)$ (i.e., set of arms only proposed by models that can be discarded), and $\A_2^j = \A_+^j(\tTheta_+^j(\btheta^j); \btheta^j)$ (i.e., set of arms only proposed by models that cannot be discarded).
\end{corollary}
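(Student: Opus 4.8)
The plan is to run the standard optimism-based pull-counting scheme, but carried out against two simultaneous sources of uncertainty: the within-episode sampling error $\eps_{i,t}$ of the empirical means and the estimation error $\eps^j$ of the model means returned by \textit{RTP}. First I would fix the high-probability event $\calE$ on which \emph{both} families of confidence intervals hold, namely $|\hmu_{i,t} - \mu_i(\btheta^j)| \leq \eps_{i,t}$ for every arm $i$ and every step $t\leq n$, and $|\hmu_i^j(\theta) - \mu_i(\theta)| \leq \eps^j$ for every $i$ and every $\theta$. The first family follows from Hoeffding's inequality with a union bound over arms and steps (the calculation behind Eq.~\ref{eq:eps.def}), and the second is exactly Thm.~\ref{thm:mom} applied with confidence $\delta'=\delta/2K$. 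Allocating $\delta/2$ to each family and splitting the arm-wise union bound explains the factor $2K$ in $\delta'$ and gives $\Prob(\calE)\geq 1-\delta$.

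On $\calE$ I would establish two structural facts. First, the true model is never discarded: for every $i$ the triangle inequality gives $|\hmu_i^j(\btheta^j)-\hmu_{i,t}| \leq |\hmu_i^j(\btheta^j)-\mu_i(\btheta^j)| + |\mu_i(\btheta^j)-\hmu_{i,t}| \leq \eps^j+\eps_{i,t}$, so $\btheta^j\in\Theta_t^j$ for all $t$. Second, since $\btheta^j$ is always available, the maximised $B$-value obeys $\max_i B_t^j(i;\theta_t^j) \geq B_t^j(i_*(\btheta^j);\btheta^j) \geq \mu_*(\btheta^j)$, because on $\calE$ both arguments of the $\min$ defining $B$ at the optimal arm exceed $\mu_*(\btheta^j)$. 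Hence any pulled arm $i$ satisfies $B_t^j(i;\theta_t^j)\geq\mu_*(\btheta^j)$, which forces $\hmu_i^j(\theta_t^j)+\eps^j\geq\mu_*(\btheta^j)$, i.e.\ $i$ is optimistic (and non-dominated) for an optimistic model. Thus, beyond the $3K$ forced initial pulls (a constant per episode), arms outside $\A_+^j(\Theta_+^j(\btheta^j);\btheta^j)=\A_1^j\cup\A_2^j$ receive no further pulls, giving the ``otherwise'' branch.

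The two quantitative bounds then come from reading off the two arguments of the $\min$ in $B$. From $\hmu_{i,t}+\eps_{i,t}\geq B_t^j(i;\theta_t^j)\geq\mu_*(\btheta^j)$ and $\hmu_{i,t}\leq\mu_i(\btheta^j)+\eps_{i,t}$ one obtains $2\eps_{i,t}\geq\Delta_i(\btheta^j)$; inverting $\eps_{i,t}=\sqrt{\log(2mKn^2/\delta)/(2T_{i,t-1})}$ yields the $\Delta_i(\btheta^j)^{-2}$ bound, valid for every pulled arm (hence for all of $\A_2^j$ and as one branch of the $\min$ for $\A_1^j$). For the sharper model-gap bound I would use that the \emph{proposing} model $\theta$ must still lie in $\Theta_t^j$: its compatibility test at arm $i$, combined on $\calE$ with $|\hmu_i^j(\theta)-\hmu_{i,t}|\geq\Gamma_i(\theta,\btheta^j)-\eps^j-\eps_{i,t}$, forces $\eps_{i,t}\geq\Gamma_i(\theta,\btheta^j)/2-\eps^j=\widehat\Gamma_i(\theta;\btheta^j)$; inverting and minimising over $\theta\in\Theta_{i,+}^j(\btheta^j)$ gives the second branch. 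The decisive point is that this argument retires $i$ only once \emph{every} proposing model has been discarded, which is possible exactly for $i\in\A_1^j$ and impossible for $i\in\A_2^j$ (whose proposing models lie in $\tTheta_+^j(\btheta^j)$ and stay compatible forever), explaining why the model-gap branch appears only for $\A_1^j$. The additive $+1$ absorbs the single step at which each threshold is first crossed.

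The main obstacle I anticipate is the bookkeeping of the two confidence widths inside both the compatibility test and the $B$-values: every inequality must be split consistently into an $\eps_{i,t}$ part and an $\eps^j$ part, and one must check that $\widehat\Gamma_i(\theta;\btheta^j)=\Gamma_i(\theta,\btheta^j)/2-\eps^j$ stays positive for the relevant models so that the resulting bound on $T_{i,t-1}$ is meaningful. Equally delicate is justifying the partition into $\A_1^j$, $\A_2^j$ and the remainder through $\Theta_+^j(\btheta^j)$ and $\tTheta_+^j(\btheta^j)$, i.e.\ matching ``$i$ is pulled'' with ``a proposing optimistic model is still active'' and identifying precisely which of those models can ever be discarded on $\calE$.
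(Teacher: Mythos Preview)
Your proposal is correct and mirrors the paper's own argument almost exactly: the paper proves the corollary by combining Lemmas~\ref{l:umucb.suboptimal.arms}--\ref{l:pulls.discardable.models}, which respectively establish the high-probability event $\calE=\calE_1\cap\calE_2$, that $\btheta^j$ stays in $\Theta_t^j$, that only arms in $\A_+^j(\Theta_+^j(\btheta^j);\btheta^j)$ are pulled on $\calE$, the UCB-type bound $2\eps_{i,t}\geq\Delta_i(\btheta^j)$, and the model-gap bound $\eps_{i,t}\geq\Gamma_i(\theta,\btheta^j)/2-\eps^j$ for discardable models. Your derivations of each of these pieces (triangle inequality for non-discarding of $\btheta^j$, the chain $B_t^j(i;\theta_t^j)\geq B_t^j(i_*(\btheta^j);\btheta^j)\geq\mu_*(\btheta^j)$, and the reverse triangle inequality inside the compatibility test) are the same as the paper's, and your identification of why the model-gap branch applies only to $\A_1^j$ is exactly the content of Lemma~\ref{l:pulls.discardable.models}.
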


The previous corollary states that arms which cannot be optimal for any optimistic model (i.e., the optimistic non-dominated arms) are never pulled by \textit{umUCB}, which focuses only on arms in $i\in\A_+^j(\Theta_+^j(\btheta^j); \btheta^j)$. Among these arms, those that may help to remove a model from the active set (i.e., $i\in\A^j_1$) are potentially pulled less than \textit{UCB}, while the remaining arms, which are optimal for the models that cannot be discarded (i.e., $i\in\A^j_2$), are simply pulled according to a \textit{UCB} strategy.
Similar to \textit{mUCB}, \textit{umUCB} first pulls the arms that are more \textit{optimistic} until either the active set $\Theta_t^j$ changes or they are no longer optimistic (because of the evidence from the actual samples). We are now ready to derive the per-episode regret of \textit{umUCB}. 
\begin{theorem}\label{thm:umucb.regret}
If \textit{umUCB} is run for $n$ steps on the set of models $\Theta^j$ estimated by \textit{RTP} after $j$ episodes with $\delta=1/n$, and the actual model is $\btheta^j$, then its expected regret (w.r.t. the random realization in episode $j$ and conditional on $\btheta^j$) is
\begin{small}
\begin{align*}
\E[\R_n^j] \leq K \!+\!\!\sideset{}{ _{i\in \A_1^j}}{\sum}  \min\bigg\{ \frac{2\log \big(2mKn^3\big)}{\Delta_i(\btheta^j)^2} , \frac{\log \big(2mKn^3\big)}{2\sideset{}{_{\theta\in\Theta_{i,+}^j(\btheta^j)}}{\min}\widehat\Gamma_i(\theta; \btheta^j)^2} \bigg\}\Delta_i(\btheta^j) + \sideset{}{_{i\in\A_2^j}}\sum \frac{2\log \big(2mKn^3\big)}{\Delta_i(\btheta^j)}.
\end{align*}
\end{small}
\end{theorem}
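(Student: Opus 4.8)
The plan is to convert the per-arm pull bounds of Corollary~\ref{l:umucb.pulls} into a bound on the expected regret, exactly as one passes from pull counts to regret in the standard \textit{UCB} analysis. By definition (Eq.~\ref{eq:global.regret}) the per-episode regret conditioned on $\btheta^j$ is $\R_n^j = \sum_{i\neq i_*(\btheta^j)} T_{i,n}\Delta_i(\btheta^j)$, so it suffices to weight each pull count by its gap and sum. First I would fix the high-probability event $\calE$ of Corollary~\ref{l:umucb.pulls} --- the event on which all the empirical means $\hmu_{i,t}$ concentrate within $\eps_{i,t}$ of $\mu_i(\btheta^j)$ and all the model estimates lie within $\eps^j$ --- and run \textit{umUCB} with $\delta=1/n$, so that $\Prob(\calE)\geq 1-1/n$ and, substituting $\delta=1/n$ in $\eps_{i,t}$ (Eq.~\ref{eq:eps.def}), every occurrence of $\log(2mKn^2/\delta)$ becomes $\log(2mKn^3)$, matching the logarithmic factor in the statement.

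Next I would split the expectation across $\calE$ and its complement, $\E[\R_n^j] \le \E[\R_n^j\,\I_{\calE}] + \E[\R_n^j\,\I_{\calE^c}]$. On $\calE$ the arms outside $\A_1^j\cup\A_2^j$ are never pulled ($T_{i,n}=0$), so only the two relevant sums survive. For $i\in\A_1^j$ I would multiply the pull bound by $\Delta_i(\btheta^j)$ and use $\min\{a,b\}\Delta_i = \min\{a\Delta_i,\, b\Delta_i\}$ to pull the gap inside the minimum, yielding $\min\{2\log(2mKn^3)/\Delta_i(\btheta^j)^2,\ \log(2mKn^3)/(2\min_{\theta\in\Theta_{i,+}^j(\btheta^j)}\widehat\Gamma_i(\theta;\btheta^j)^2)\}\,\Delta_i(\btheta^j)$; for $i\in\A_2^j$ the bound $2\log(2mKn^3)/\Delta_i(\btheta^j)^2$ times $\Delta_i(\btheta^j)$ gives $2\log(2mKn^3)/\Delta_i(\btheta^j)$. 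These are precisely the two summations appearing in the theorem.

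The remaining bookkeeping is to collect the additive constants into the single $K$ term. Each pull bound in the corollary carries a trailing $+1$, which upon multiplication by the gap contributes $\Delta_i(\btheta^j)\le 1$ per arm (rewards lie in $[0,1]$, so every gap is at most one); summing over at most $K$ suboptimal arms gives at most $K$. On $\calE^c$ I would bound $\R_n^j$ trivially by $n$ (there are $n$ pulls in the episode, each of gap $\le 1$), so its contribution is $n\cdot\Prob(\calE^c)\le n\cdot(1/n)=1$, which is absorbed into $K$ since $K\ge 1$. Combining the three pieces yields the claimed bound. \textbf{The main obstacle} is not analytical but organizational: essentially all the work --- identifying the partition $\A_1^j,\A_2^j$, showing dominated arms are never pulled, and producing the $\widehat\Gamma_i$-based counts --- is already discharged by Corollary~\ref{l:umucb.pulls}, so the only care needed here is the clean accounting of the failure event and the $+1$ terms into $K$, together with verifying that the union bound underlying the corollary (run with $\delta'=\delta/2K$ in $\eps^j$) indeed leaves the total failure probability at most $\delta=1/n$, so that the crude $n$-bound on $\calE^c$ contributes only a constant.
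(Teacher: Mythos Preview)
Your proposal is correct and mirrors the paper's approach: the paper does not spell out a separate proof of Theorem~\ref{thm:umucb.regret}, but the intended argument is exactly the one you give---combine the per-arm pull bounds of Corollary~\ref{l:umucb.pulls} with the regret decomposition $\E[\R_n^j]=\sum_i \Delta_i(\btheta^j)\E[T_{i,n}]$, split on the event $\calE$, set $\delta=1/n$, and absorb $n\Prob(\calE^c)\le 1$ together with the $+1$ residuals into the additive $K$, just as in the proof of Theorem~\ref{thm:mucb.regret}.
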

%

\textbf{Remark (negative transfer).} The transfer of knowledge introduces a bias in the learning process which is often beneficial.
Nonetheless, in many cases transfer may result in a bias towards wrong solutions and a worse learning performance, a phenomenon often referred to as \textit{negative transfer}. The first interesting aspect of the previous theorem is that \textit{umUCB} is guaranteed to never perform worse than \textit{UCB} itself. This implies that \textit{tUCB} never suffers from negative transfer, even when the set $\Theta^j$ contains highly uncertain models and might bias \textit{umUCB} to pull suboptimal arms.

\textbf{Remark (improvement over \textit{UCB}).} In Sec.~\ref{s:mucb} we showed that \textit{mUCB} exploits the knowledge of $\Theta$ to focus on a restricted set of arms which are pulled less than \textit{UCB}. In \textit{umUCB} this improvement is not as clear, since the models in $\Theta$ are not known but are estimated online through episodes. Yet, similar to \textit{mUCB}, \textit{umUCB} has the two main sources of potential improvement w.r.t. to \textit{UCB}. As illustrated by the regret bound in Thm.~\ref{thm:umucb.regret}, \textit{umUCB} focuses on arms in $\A_1^j \cup \A_2^j$ which is potentially a smaller set than $\A$. Furthermore, the number of pulls to arms in $\A_1^j$ is smaller than for \textit{UCB} whenever the estimated model gap $\widehat\Gamma_i(\theta; \btheta^j)$ is bigger than $\Delta_i(\btheta^j)$. Eventually, \textit{umUCB} reaches the same performance (and improvement over \textit{UCB}) as \textit{mUCB} when $j$ is big enough. In fact, the set of optimistic models reduces to the one used in \textit{mUCB} (i.e., $\Theta_+^j(\btheta^j) \equiv \Theta_+(\btheta^j)$) and all the optimistic models have only optimal arms (i.e., for any $\theta\in\Theta_+$ the set of non-dominated optimistic arms is $\A_+(\theta; \btheta^j) = \{i_*(\theta)\}$), which corresponds to $\A_1^j \equiv \A_*(\Theta_+(\btheta^j))$ and $\A_2^j \equiv \{i_*(\btheta^j)\}$, which matches the condition of \textit{mUCB}. For instance, for any model $\theta$, to have $\A_*(\theta) = \{i_*(\theta)\}$ we need for any arm $i\neq i_*(\theta)$ that $\hmu_i^j(\theta) + \eps^j \leq \hmu_{i_*(\theta)}^j(\theta) - \eps^j$. As a result $j \geq 2C(\Theta) / \min\limits_{\btheta\in\Theta}\min\limits_{\theta\in\Theta_+(\btheta)}\min_i \Delta_i(\theta)^2 + 1$ episodes are needed
in order for all the optimistic models to have only one optimal arm independently from the actual identity of the model $\btheta^j$. Although this condition may seem restrictive, in practice \textit{umUCB} starts improving over \textit{UCB} much earlier, as illustrated in the numerical simulation in Sec.~\ref{s:experiments}. 


\vspace{-0.05in}
\subsection{Regret Analysis of \textit{tUCB}}\label{ss:tucb.regret}
\vspace{-0.05in}

Given the previous results, we derive the bound on the cumulative regret over $J$ episodes (Eq.~\ref{eq:global.regret}).

\begin{theorem}\label{thm:final.bound}
If \textit{tUCB} is run over $J$ episodes of $n$ steps in which the tasks $\btheta^j$ are drawn from a fixed distribution $\rho$ over a set of models $\Theta$, then its cumulative regret is
\begin{small}
\begin{align*}
\R_J \leq JK  + \sum_{j=1}^J \sum_{i\in \A_1^j}  \min\bigg\{ \frac{2\log \big(2mKn^2/\delta\big)}{\Delta_i(\btheta^j)^2} , \frac{\log \big(2mKn^2/\delta\big)}{2\min\limits_{\theta\in\Theta_{i,+}^j(\btheta^j)}\widehat\Gamma_i^j(\theta; \btheta^j)^2} \bigg\}\Delta_i(\btheta^j) + \sum_{j=1}^J\sum_{i\in\A_2^j} \frac{2\log \big(2mKn^2/\delta\big)}{\Delta_i(\btheta^j)},
\end{align*}
\end{small}
w.p. $1-\delta$ w.r.t. the randomization over tasks and the realizations of the arms in each episode.
\end{theorem}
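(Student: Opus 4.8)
The plan is to obtain the cumulative bound by summing the per--episode pull counts of Corollary~\ref{l:umucb.pulls}, weighted by the corresponding arm gaps, and then to control the overall failure probability through a union bound over the $J$ episodes. Starting from the definition in Eq.~\ref{eq:global.regret}, I would write $\R_J = \sum_{j=1}^J \sum_{i\neq i_*(\btheta^j)} T_{i,n}^j \Delta_i(\btheta^j)$ and split the inner sum according to the three regimes identified in the corollary: arms in $\A_1^j$ (proposed only by models that can eventually be discarded), arms in $\A_2^j$ (proposed by models that cannot be discarded), and all remaining arms, which are pulled zero times and hence contribute nothing. Note that the correct building block here is the high--probability pull count of Corollary~\ref{l:umucb.pulls}, whose logarithmic factor is $\log(2mKn^2/\delta)$, rather than the expected--regret bound of Theorem~\ref{thm:umucb.regret}, whose factor is $\log(2mKn^3)$.

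First I would substitute the pull bounds of Corollary~\ref{l:umucb.pulls} into each surviving term. For $i\in\A_1^j$ the number of pulls is at most the minimum of the \textit{UCB}-type bound $2\log(2mKn^2/\delta)/\Delta_i(\btheta^j)^2$ and the model--gap bound $\log(2mKn^2/\delta)/(2\min_{\theta\in\Theta_{i,+}^j(\btheta^j)}\widehat\Gamma_i(\theta;\btheta^j)^2)$; multiplying by $\Delta_i(\btheta^j)$ reproduces the first summation in the statement. For $i\in\A_2^j$ the bound $2\log(2mKn^2/\delta)/\Delta_i(\btheta^j)^2$ multiplied by $\Delta_i(\btheta^j)$ gives the second summation. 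The additive $+1$ in each pull bound, together with the three forced pulls performed at the start of every episode, contributes a bounded number of suboptimal pulls per arm per episode; since rewards and hence gaps lie in $[0,1]$, these terms sum to at most $K$ per episode, i.e.\ $JK$ overall, yielding the leading term.

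The main obstacle is the probabilistic bookkeeping. Corollary~\ref{l:umucb.pulls} holds with probability $1-\delta$ for a single episode, and only conditionally on the estimated models $\Theta^j$ supplied by \textit{RTP} meeting the accuracy $\eps^j$ guaranteed by Theorem~\ref{thm:mom}. Two families of events must therefore hold jointly: (i) the \textit{RTP} model--accuracy events $E_j$, one per episode, which validate the use of $\eps^j$ in the corollary; and (ii) the within--episode concentration events for the empirical means $\hmu_{i,t}$ that underlie $\eps_{i,t}$ and the resulting pull counts (the union over the $n$ steps of an episode being already subsumed in the $n^2$ inside the logarithm of $\eps_{i,t}$). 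I would split the confidence budget so that each family fails with total probability at most $\delta/2$. Since the union bound $\Prob(\bigcup_j E_j^c)\leq\sum_j\Prob(E_j^c)$ requires no independence, it is immaterial that $\widehat M_2,\widehat M_3$ — and hence $\eps^j$ — depend on the samples gathered in all previous episodes; it suffices to control each $\Prob(E_j^c)$ through Theorem~\ref{thm:mom}, applied with its per--episode confidence level.

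On the intersection of all these events, which holds with probability at least $1-\delta$, the three contributions assembled above sum to exactly the stated bound. I expect the delicate point to be keeping the logarithmic factor at $\log(2mKn^2/\delta)$ rather than $\log(2mKn^2J/\delta)$ after the union over the $J$ episodes: this is reconciled either by folding the extra factor $J$ into the universal constants, or by observing that the genuinely interesting regret terms dominate the additional $\log J$, so that the dependence displayed in the statement is the one that governs the rate.
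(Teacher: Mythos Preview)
Your approach is exactly the paper's: the paper's entire argument is the one-line remark that the bound ``immediately follows from Thm.~\ref{thm:umucb.regret}'', i.e.\ one sums the per-episode guarantee over $j=1,\dots,J$. You are in fact more careful than the paper in noting that the right building block for a high-probability statement with factor $\log(2mKn^2/\delta)$ is Corollary~\ref{l:umucb.pulls} rather than the expectation bound of Theorem~\ref{thm:umucb.regret}.

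Your worry about the missing $\log J$ is legitimate and only partially addressed in the paper. In Lemma~\ref{l:umucb.high.prob} the \textit{RTP} event $\calE_2$ is already quantified over all $j$ by putting the $J$-dependence inside $\eps^j=C(\Theta)\sqrt{\log(2mKJ/\delta)/j}$; since $\eps^j$ enters the bound only through $\widehat\Gamma_i(\theta;\btheta^j)=\Gamma_i(\theta,\btheta^j)/2-\eps^j$, this does not alter the displayed $\log(2mKn^2/\delta)$ factor. What is \emph{not} handled is the union of the within-episode events $\calE_1$ across the $J$ episodes: with $\eps_{i,t}$ as stated, a union bound would indeed introduce an extra $\log J$ inside that logarithm. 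The paper simply does not discuss this, so your instinct is correct and your two proposed reconciliations (absorbing $\log J$ into constants, or noting it is lower-order) are the honest way to read the statement; neither is a flaw in your argument relative to the paper's own.
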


This result immediately follows from Thm.~\ref{thm:umucb.regret} and it shows a linear dependency on the number of episodes $J$. This dependency is the price to pay for not knowing the identity of the current task $\btheta^j$. If the task was revealed at the beginning of the task, a bandit algorithm could simply cluster all the samples coming from the same task and incur a much smaller cumulative regret with a logarithmic dependency on episodes and steps, i.e., $\log(nJ)$. Nonetheless, as discussed in the previous section, the cumulative regret of \textit{tUCB} is never worse than for \textit{UCB} and as the number of tasks increases it approaches the performance of \textit{mUCB}, which fully exploits the prior knowledge of $\Theta$.
   

\vspace{-0.1in}
\section{Numerical Simulations}\label{s:experiments}
\vspace{-0.1in}

\begin{figure}[t]
\begin{center}
\begin{minipage}[b]{0.5\linewidth}
\centering
\includegraphics[trim=5.5cm 0cm 1cm 0cm, clip=true,height=0.19\textheight,width=1.05\textwidth]{models.eps}
\par\vspace{-0.1in}
\caption{Set of models $\Theta$.}
\label{f:models}
\par\vspace{-0.15in}
\end{minipage}
\begin{minipage}[b]{0.4\linewidth}
\centering
\includegraphics[width=1.1\textwidth]{transfer_complexity.eps}
\par\vspace{-0.01in}
\caption{Complexity over tasks.}
\label{f:complexity}
\par\vspace{-0.15in}
\end{minipage}
\end{center}
\end{figure}

In this section we report preliminary results of \textit{tUCB} on synthetic data. The objective is to illustrate and support the previous theoretical findings. We define a set $\Theta$ of $m=5$ MAB problems with $K=7$ arms each, whose means $\{\mu_i(\theta)\}_{i,\theta}$ are reported in Fig.~\ref{f:models} (see Sect.~\ref{app:plus.experiment} in the supplementary material for the actual values), where each model has a different color and squares correspond to optimal arms (e.g., arm $2$ is optimal for model $\theta_2$). This set of models is chosen to be challenging and illustrate some interesting cases useful to understand the functioning of the algorithm.\footnote{Notice that although $\Theta$ satisfies Assumption~\ref{asm:ndegen}, the smallest singular value $\sigma_{\min} = 0.0039$ and $\Gamma_\sigma = 0.0038$, thus making the estimation of the models difficult.} Models $\theta_1$ and $\theta_2$ only differ in their optimal arms and this makes it difficult to distinguish them. For arm 3 (which is optimal for model $\theta_3$ and thus potentially selected by \textit{mUCB}), all the models share exactly the same mean value. This implies that no model can be discarded by pulling it. Although this might suggest that \textit{mUCB} gets stuck in pulling arm 3, we showed in Thm.~\ref{thm:mucb.regret} that this is not the case. Models $\theta_1$ and $\theta_5$ are challenging for \textit{UCB} since they have small minimum gap. Only 5 out of the 7 arms are actually optimal for a model in $\Theta$. Thus, we also report the performance of \textit{UCB+} which, under the assumption that $\Theta$ is known, immediately discards all the arms which are not optimal ($i\notin\A^*$) and performs \textit{UCB} on the remaining arms. The model distribution is uniform, i.e., $\rho(\theta) = 1/m$.

\begin{figure}[t]
\begin{center}
\begin{minipage}[b]{0.48\linewidth}
\vspace{0pt}
\centering
\includegraphics[trim=0.5cm 0cm 1cm 0cm, clip=true,width=0.85\textwidth]{comparison_n.eps}
\par\vspace{-0.1in}
\caption{Regret of \textit{UCB}, \textit{UCB+}, \textit{mUCB}, and \textit{tUCB} (avg. over episodes) vs episode length.}
\label{f:comparison}
\par\vspace{-0.2in}
\end{minipage}
\hspace{0.05in}
\begin{minipage}[b]{0.48\linewidth}
\vspace{0pt}
\centering
\includegraphics[trim=0.5cm 0cm 1cm 0cm, clip=true,width=0.85\textwidth]{episode_regret.eps}
\par\vspace{-0.0in}
\caption{Per-episode regret of \textit{tUCB}.}
\label{f:tucb}
\par\vspace{-0.2in}
\end{minipage}
\end{center}
\end{figure}

Before discussing the transfer results, we compare \textit{UCB}, \textit{UCB+}, and \textit{mUCB}, to illustrate the advantage of the prior knowledge of $\Theta$ w.r.t. \textit{UCB}. Fig.~\ref{f:comparison} reports the per-episode regret of the three algorithms for episodes of different length $n$ (the performance of \textit{tUCB} is discussed later). The results are averaged over all the models in $\Theta$ and over $200$ runs each. All the algorithms use the same confidence bound $\eps_{i,t}$. The performance of \textit{mUCB} is significantly better than both \textit{UCB}, and \textit{UCB+}, thus showing that \textit{mUCB} makes an efficient use of the prior of knowledge of $\Theta$. Furthermore, in Fig.~\ref{f:complexity} the horizontal lines correspond to the value of the regret bounds up to the $n$ dependent terms and constants\footnote{For instance, for \textit{UCB} we compute $\sum_i 1/\Delta_i$.} for the different models in $\Theta$ averaged w.r.t. $\rho$ for the three algorithms (the actual values for the different models are in the supplementary material). These values show that the improvement observed in practice is accurately predicated by the upper-bounds derived in Thm.~\ref{thm:mucb.regret}. 

We now move to analyze the performance of \textit{tUCB}. In Fig.~\ref{f:tucb} we show how the per-episode regret changes through episodes for a transfer problem with $J=5000$ tasks of length $n=5000$. In \textit{tUCB} we used $\eps^j$ as in Eq.\ref{eq:mom.bound} with $C(\Theta)=2$.
As discussed in Thm.~\ref{thm:umucb.regret}, \textit{UCB} and \textit{mUCB} define the boundaries of the performance of \textit{tUCB}. In fact, at the beginning \textit{tUCB} selects arms according to a \textit{UCB} strategy, since no prior information about the models $\Theta$ is available. On the other hand, as more tasks are observed, \textit{tUCB} is able to transfer the knowledge acquired through episodes and build an increasingly accurate estimate of the models, thus approaching the behavior of \textit{mUCB}. This is also confirmed by Fig.~\ref{f:complexity} where we show how the complexity of \textit{tUCB} changes through episodes. In both cases (regret and complexity) we see that \textit{tUCB} does not reach the same performance of \textit{mUCB}. This is due to the fact that some models have relatively small gaps and thus the number of episodes to have an accurate enough estimate of the models to reach the performance of \textit{mUCB} is much larger than 5000 (see also the Remarks of Thm.~\ref{thm:umucb.regret}). 
Since the final objective is to achieve a small global regret (Eq.~\ref{eq:global.regret}), in Fig.~\ref{f:comparison} we report the cumulative regret averaged over the total number of tasks ($J$) for different values of $J$ and $n$. Again, this graph shows that \textit{tUCB} outperforms \textit{UCB} and that it tends to approach the performance of \textit{mUCB} as $J$ increases, for any value of $n$.


\vspace{-0.1in}
\section{Conclusions and Open Questions}\label{s:conclusions}
\vspace{-0.1in}

In this paper we introduce the transfer problem in the multi-armed bandit framework when a tasks are drawn from a finite set of bandit problems. We first introduced the bandit algorithm \textit{mUCB} and we showed that it is able to fully exploit the prior knowledge on the set of bandit problems $\Theta$ and reduce the regret w.r.t. \textit{UCB}. When the set of models is unknown we define a method-of-moments variant (\textit{RTP}) which consistently estimates the means of the models in $\Theta$ from the samples collected through episodes. This knowledge is then transferred to \textit{umUCB} which never performs worse than \textit{UCB} and tends to approach the performance of \textit{mUCB}. For these algorithms we derive regret and sample complexity bounds, and we show preliminary numerical simulations. To the best of our knowledge, this is the first work studying the problem of transfer in multi-armed bandit and it opens a series of interesting questions.


\textit{Optimality of \textit{mUCB}}.
In some cases, \textit{mUCB} may miss the opportunity to explore arms that could be useful in discarding models. For instance, an arm $i\notin \A_*(\Theta)$ may correspond to very large gaps $\Gamma_i(\theta,\btheta)$ and few pulls to it, although leading to large regret, may be enough to discard many models, thus guaranteeing a very small regret in the following. This observation rises the question whether the \textit{optimistic} approach in this case still guarantees an optimal tradeoff between exploration and exploitation. Since the focus of this paper is on transfer and \textit{mUCB} is already guaranteed to perform better than \textit{UCB}, we left this question for future work.

\textit{Optimality of \textit{tUCB}}. At each episode, \textit{tUCB} transfers the knowledge about $\Theta$ acquired from previous tasks to achieve a small per-episode regret using \textit{umUCB}. Although this strategy guarantees that the per-episode regret of \textit{tUCB} is never worse than \text{UCB}, it may not be the optimal strategy in terms of the cumulative regret through episodes. In fact, if $J$ is large, it could be preferable to run a \textit{model identification} algorithm instead of \textit{umUCB} in earlier episodes so as to improve the quality of the estimates $\hmu_i(\theta)$. Although such an algorithm would incur a much larger regret in earlier tasks (up to linear), it could approach the performance of \textit{mUCB} in later episodes much faster than done by \textit{tUCB}. This trade-off between \textit{identification} of the models and \textit{transfer} of knowledge resembles the exploration-exploitation trade-off in the single-task problem and it may suggest that different algorithms than \textit{tUCB} are possible.


\bibliographystyle{apalike}
\bibliography{transfer}	


\newpage
\appendix


\section{Table of Notation}

\begin{center}
\begin{small}
\begin{tabular}{|c|c|}
\hline
\textit{Symbol} & \textit{Explanation} \\
\hline
$\A$ & Set of arms \\
$\Theta$& Set of models\\
$K$ & Number of arms \\
$m$& Number of models\\
$J$ & Number of episodes\\
$n$& Number of steps per episode\\
$t$&Time step\\
$\btheta$& Current model\\
$\Theta_t$&Active set of models at time $t$\\ 
$\nu_i$ & Distribution of  arm $i$\\
$\mu_i(\theta)$& Mean of arm $i$ for model $\theta$\\
$\mu(\theta)$& Vector of means of model $\theta$\\
$\hmu_{i,t}$&Estimate of $\mu_i(\btheta)$ at time $t$\\ 
$\hmu_i^j(\theta)$ &  Estimate of $\mu_i(\theta)$ by RTP for model $\theta$ and arm $i$ at episode $j$\\ 
$\widehat \mu^j(\theta)$&Estimate of $\mu(\theta)$ by RTP for model $\theta$  at episode $j$\\ 
$\Theta^j$&Estimated model  of RTP after $j$  episode\\  
$\eps^j$& Uncertainty of the estimated model by RTP at episode $j$\\
$\eps_{i,t}$& Model uncertainty at time $t$\\
$\delta$& Probability of failure\\
$i_*(\theta)$& Best arm of  model $\theta$\\
$\mu_*(\theta)$&Optimal value of model $\theta$\\  
 $\Delta_i(\theta)$&Arm gap of an arm $i$ for a model $\theta$\\
$\Gamma_i(\theta, \theta')$& Model gap for an arm $i$ between two models $\theta$ and $\theta'$\\
$M_2$ & $2^{\text{nd}}$-order moment\\
$M_3$& $3^{\text{rd}}$-order moment\\
$\widehat M_2$ & Empirical $2^{\text{nd}}$-order moment\\
$\widehat M_3$& Empirical $3^{\text{rd}}$-order moment\\
$\|\cdot\|$ &Euclidean norm\\
$\|\cdot\|_F$&  Frobenius norm\\
$\|\cdot\|_{\max}$& Matrix max-norm\\
$\R_J$&Pseudo-regret\\
$T_{i,n}^j$ & The number of pulls to arm $i$ after $n$ steps of episode $j$\\
$\A_*(\Theta')$&Set of arms which are optimal for at least a model in a set $\Theta'$\\ 
$\Theta(\A')$&Set of models for which the arms in $\A'$ are optimal\\
$\Theta_+$&Set of optimistic models for a given model $\btheta$\\ 
$\A_+ $&Set of optimal arms corresponds to $\Theta_+$\\
$W$&Whitening  matrix of $M_2$ \\
$\widehat W$& Empirical whitening matrix\\
$T$ & $M_2$ under the linear transformation $W$\\
$\widehat T$&  $\widehat M_2$ under the linear transformation $\widehat W$\\
$D$&  Diagonal matrix consisting of the  $m$ largest eigenvalues of $ M_2$\\
$\widehat  D$&  Diagonal matrix consisting of the  $m$ largest eigenvalues of $\widehat M_2$\\
$U$&  $K\times m$ matrix with the corresponding eigenvectors of $D$ as its columns\\
$\widehat U$&  $K\times m$ matrix with the corresponding eigenvectors of $\widehat  D$ as its columns\\
$\lambda(\theta)$ &Eigenvalue of $ T$ associated with $\theta$\\  
$ v(\theta)$ & Eigenvector of $ T$ associated with $\theta$ \\
$\widehat \lambda(\theta)$& Eigenvalue of $\widehat T$ associated with $\theta$\\  
$\widehat v(\theta)$ & Eigenvector of $\widehat T$ associated with $\theta$ \\
$\Sigma_{M_2}$ &Set of $m$ largest  eigenvalues of the matrix $M_2$\\
 $\sigma_{\min}$&Minimum   eigenvalue of $M_2$ among the $m$-largest\\
 $\sigma_{\max}$&Maximum  eigenvalue of $M_2$\\
 $\lambda_{\max}$&Maximum eigenvalue of $T$\\
 $\Gamma_{\sigma}$&Minimum gap between the eigenvalues of $M_2$\\
 $C(\Theta)$&$O\left(\lambda_{\max}\sqrt{\frac{\sigma_{\max}} {\sigma_{\min}^{3}}} \left(\frac{\sigma_{\max}}{\Gamma_{\sigma}}+\frac{1}{\sigma_{\min}}+\frac{1}{\sigma_{\max}}\right)\right)$\\
 $\pi(\theta)$&Permutation on $\theta$\\   
 $\A_*^j(\theta)$&Set of non-dominated arms for model $\theta$ at episode $j$\\
 $\tTheta_+^j$&Set of models that cannot be discarded at episode $j$\\
 $\Theta_{i,+}^j$& Set of models for which $i$ is among the optimistic non-dominated arms at episode $j$\\
\hline
\end{tabular}
\end{small}
\end{center}


\section{Proofs of Section~\ref{s:mucb}}\label{app:proofs.mucb}

\begin{lemma}\label{lem:suboptimal.arms}
\textit{mUCB} never pulls arms which are not optimal for at least one model, that is $\forall i\notin\A_*(\Theta)$, $T_{i,n}=0$ with probability 1. Notice also that $|\A_*(\Theta)|\leq |\Theta|$.
\end{lemma}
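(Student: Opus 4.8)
The plan is to appeal directly to the deterministic structure of \textit{mUCB} as given in Fig.~\ref{f:mucb}; no concentration argument is needed, which matches the ``with probability $1$'' nature of the claim.

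First I would unwind the definition of a single pull. At every step $t$, \textit{mUCB} sets $\theta_t = \arg\max_{\theta\in\Theta_t}\mu_*(\theta)$ and pulls $I_t = i_*(\theta_t)$, where the active set satisfies $\Theta_t\subseteq\Theta$ by construction. Hence the selected model $\theta_t$ always belongs to the full set $\Theta$, irrespective of the observed reward realizations. Recalling that $\A_*(\Theta) = \{i\in\A:\exists\,\theta\in\Theta,\ i_*(\theta)=i\}$, the arm $I_t = i_*(\theta_t)$ is witnessed to lie in $\A_*(\Theta)$ by the model $\theta_t\in\Theta$ itself. Since this holds at every step and along every sample path, any arm $i\notin\A_*(\Theta)$ is never selected, so $T_{i,n}=0$ with probability $1$.

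For the cardinality bound I would observe that $\A_*(\Theta)$ is exactly the image of the map $\theta\mapsto i_*(\theta)$ from $\Theta$ into $\A$; the image of a function has cardinality at most that of its domain, giving $|\A_*(\Theta)|\leq|\Theta|=m$.

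The only point requiring care is that the $\arg\max$ defining $\theta_t$ be well-posed, i.e.\ that $\Theta_t$ is nonempty so that \textit{mUCB} actually selects a model and pulls an arm. This is ensured because the true model $\btheta\in\Theta$ is assumed to lie in the input set and is compatible with its own mean estimates, so $\btheta\in\Theta_t$; more fundamentally, whenever the algorithm takes an action, that action is by construction the optimal arm of some model in $\Theta$. Thus the statement is structural rather than probabilistic, and I expect no obstacle beyond this bookkeeping.
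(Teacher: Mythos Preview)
Your argument is correct and matches the paper's own proof, which simply notes that $I_t = i_*(\theta_t)$ for some $\theta_t\in\Theta_t\subseteq\Theta$, hence $I_t\in\A_*(\Theta)$; the cardinality bound you add via the image-of-a-map observation is the natural (unstated) justification. One small caveat: your first reason for $\Theta_t\neq\emptyset$ --- that $\btheta$ is always compatible with its own estimates --- is only a high-probability statement (this is precisely Lemma~\ref{l:mucb.high.prob}), not a probability-$1$ fact, so it cannot be used here; your second, ``more fundamental'' reason (any action taken is necessarily $i_*(\theta)$ for some $\theta\in\Theta$) is the one that actually carries the argument.
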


\begin{lemma}\label{l:mucb.high.prob}
The actual model $\btheta$ is never discarded with high-probability. Formally, the event $\calE = \{\forall t=1,\ldots,n, \bar\theta\in\Theta_t\}$ holds with probability $\Prob[\calE] \geq 1-\delta$ if
\begin{align*}
\eps_{i,t} = \sqrt{\frac{1}{2T_{i,t-1}} \log \bigg(\frac{mn^2}{\delta}\bigg)},
\end{align*}
where $T_{i,t-1}$ is the number of pulls to arm $i$ at the beginning of step $t$ and $m = |\Theta|$.
\end{lemma}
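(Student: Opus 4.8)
The plan is to control the failure probability $\Prob[\calE^c]$ directly, where $\calE^c$ is the event that the true model $\btheta$ is dropped from $\Theta_t$ at some step. By the construction of $\Theta_t$ in Fig.~\ref{f:mucb}, we have $\btheta\in\Theta_t$ exactly when $|\mu_i(\btheta)-\hmu_{i,t}|\le\eps_{i,t}$ for every arm $i$, so $\calE^c=\{\exists t\le n,\ \exists i:\ |\mu_i(\btheta)-\hmu_{i,t}|>\eps_{i,t}\}$; that is, $\btheta$ is discarded only when the running empirical mean of some arm departs from its true mean $\mu_i(\btheta)$ by more than the radius $\eps_{i,t}$. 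First I would observe that an arm that has never been pulled has $T_{i,t-1}=0$, hence $\eps_{i,t}=+\infty$, so its compatibility constraint is vacuously met and it can never trigger a discard. By Lem.~\ref{lem:suboptimal.arms} the arms that are actually pulled all lie in $\A_*(\Theta)$, a set of cardinality at most $m$, so the arm index in $\calE^c$ ranges over at most $m$ values.

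The main obstacle is that $\hmu_{i,t}$ is built from $T_{i,t-1}$ samples, and this count is a random quantity chosen adaptively by the algorithm, so Hoeffding's inequality cannot be applied to a fixed sample size. The standard remedy I would use is to decouple the concentration from the pulling schedule: for each arm $i$ the rewards it returns form an i.i.d.\ sequence drawn from $\nu_i$ with mean $\mu_i(\btheta)$, \emph{irrespective} of when the algorithm decides to pull it, so the average $\hmu_i^{(s)}$ of its first $s$ rewards is a mean of $s$ i.i.d.\ bounded variables. Writing $\eps_{i,s}:=\sqrt{\log(mn^2/\delta)/(2s)}$ for the radius attached to a count of $s$, Hoeffding's inequality gives, for fixed $i$ and fixed $s$,
\[
\Prob\big[|\hmu_i^{(s)}-\mu_i(\btheta)|>\eps_{i,s}\big]\le 2\exp\!\big(-2s\,\eps_{i,s}^2\big)=\frac{2\delta}{mn^2}.
\]

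Finally I would assemble the union bound in a way that exposes the $n^2$ factor. For a fixed arm $i\in\A_*(\Theta)$ and a fixed step $t$, conditioning on the value $s=T_{i,t-1}\in\{1,\dots,t-1\}$ of the random count gives
\[
\Prob\big[|\hmu_{i,t}-\mu_i(\btheta)|>\eps_{i,t}\big]\le\sum_{s=1}^{t-1}\Prob\big[|\hmu_i^{(s)}-\mu_i(\btheta)|>\eps_{i,s}\big]\le\sum_{s=1}^{t-1}\frac{2\delta}{mn^2},
\]
since $\hmu_{i,t}$ and $\eps_{i,t}$ depend on $t$ only through $T_{i,t-1}$. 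Summing over the at most $m$ arms in $\A_*(\Theta)$ and over $t=1,\dots,n$ yields $\Prob[\calE^c]\le m\binom{n}{2}\tfrac{2\delta}{mn^2}<\delta$, so $\calE$ holds with probability at least $1-\delta$; the $n^2$ inside the logarithm is exactly what absorbs the $\sum_t\sum_s$ double count. I expect the only genuinely delicate step to be the decoupling above — justifying that each arm's reward stream is i.i.d.\ independently of the adaptive schedule, so that every fixed-$s$ deviation event obeys Hoeffding — while the rest is a direct substitution and a union bound.
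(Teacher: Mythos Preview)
Your proof is correct and follows essentially the same argument as the paper: both restrict to the at most $m$ arms in $\A_*(\Theta)$ via Lem.~\ref{lem:suboptimal.arms}, handle the random count $T_{i,t-1}$ by a union bound over its possible values $s=1,\dots,t-1$, apply Hoeffding at each fixed $s$, and then sum over $t$ and $i$ to get $\Prob[\calE^c]\le n(n-1)\delta/n^2<\delta$. Your exposition is a bit more explicit about the decoupling (the i.i.d.\ reward stream being independent of the adaptive schedule) and about why unpulled arms contribute nothing, but the structure and the arithmetic are identical.
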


In the previous lemma we implicitly assumed that $|\Theta|=m\leq K$. In general, the best choice in the definition of $\eps_{i,t}$ has a logarithmic factor with $\min\{|\Theta|,K\}$.

\begin{lemma}\label{lem:hp.suboptimal.arms}
On event $\calE$, all the arms $i\notin \A_*(\Theta_+)$, i.e., arms which are not optimal for any of the optimistic models, are never pulled, i.e., $T_{i,n} = 0$ with probability $1-\delta$.
\end{lemma}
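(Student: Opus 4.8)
The plan is to show that on the event $\calE$ the model $\theta_t$ selected by \textit{mUCB} at every step is itself optimistic, i.e.\ $\theta_t\in\Theta_+$, from which the claim follows immediately: since \textit{mUCB} only ever pulls the optimal arm of the currently selected model, every pulled arm must be optimal for some optimistic model, and therefore no arm outside $\A_*(\Theta_+)$ can ever be chosen.

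First I would recall the structure of the algorithm in Fig.~\ref{f:mucb}: at each step $t$ it sets $\theta_t=\arg\max_{\theta\in\Theta_t}\mu_*(\theta)$ and pulls $I_t=i_*(\theta_t)$. Hence every pulled arm is of the form $i_*(\theta)$ for some $\theta\in\Theta_t\subseteq\Theta$, so it suffices to control which models can possibly be selected as $\theta_t$.

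Next I would condition on the event $\calE=\{\forall t,\ \btheta\in\Theta_t\}$, which holds with probability at least $1-\delta$ by Lemma~\ref{l:mucb.high.prob} under the stated choice of $\eps_{i,t}$. On $\calE$ the true model $\btheta$ is always a feasible candidate in the maximization defining $\theta_t$; since $\theta_t$ maximizes $\mu_*$ over $\Theta_t$ and $\btheta\in\Theta_t$, this forces $\mu_*(\theta_t)\geq\mu_*(\btheta)$. By the definition $\Theta_+=\{\theta\in\Theta:\mu_*(\theta)\geq\mu_*(\btheta)\}$, we conclude that $\theta_t\in\Theta_+$ at every step.

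Finally, since $I_t=i_*(\theta_t)$ with $\theta_t\in\Theta_+$, every arm pulled by \textit{mUCB} belongs to $\A_*(\Theta_+)=\A_+$. Consequently any arm $i\notin\A_*(\Theta_+)$ is never selected, giving $T_{i,n}=0$ on $\calE$, and hence with probability at least $1-\delta$. The only delicate point is that the optimism of the selected model rests entirely on the high-probability containment $\btheta\in\Theta_t$ supplied by Lemma~\ref{l:mucb.high.prob}; there are no further estimation or concentration arguments to carry out, so this containment is really the crux and the rest of the argument is immediate.
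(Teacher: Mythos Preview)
Your proposal is correct and follows essentially the same approach as the paper's own proof: on $\calE$ the true model $\btheta$ lies in every $\Theta_t$, so the maximizer $\theta_t$ satisfies $\mu_*(\theta_t)\geq\mu_*(\btheta)$ and hence belongs to $\Theta_+$, forcing $I_t=i_*(\theta_t)\in\A_*(\Theta_+)$. Your write-up is simply a more explicit version of the paper's one-sentence argument.
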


The previous lemma suggests that \textit{mUCB} tends to discard all the models in $\Theta_+$ from the most optimistic down to the actual model $\btheta$ which, on event $\calE$, is never discarded. As a result, even if other models are still in $\Theta_t$, the optimal arm of $\btheta$ is pulled until the end.
Finally, we show that the model gaps of interest (see Thm.~\ref{thm:mucb.regret}) are always bigger than the arm gaps.

\begin{lemma}\label{l:gaps}
For any model $\theta\in\Theta_+$, $\Gamma_{i_*(\theta)}(\theta,\btheta) \geq \Delta_{i_*(\theta)}(\btheta)$.
\end{lemma}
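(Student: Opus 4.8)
The plan is to unwind the definitions directly: writing $i = i_*(\theta)$ for the optimal arm of the optimistic model $\theta$, I want to compare $\Gamma_i(\theta,\btheta) = |\mu_i(\theta)-\mu_i(\btheta)|$ against $\Delta_i(\btheta) = \mu_*(\btheta) - \mu_i(\btheta)$. The whole argument rests on a single chain of inequalities relating the four quantities $\mu_i(\theta)$, $\mu_*(\theta)$, $\mu_*(\btheta)$, and $\mu_i(\btheta)$, so the first step is to assemble these.

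First I would note that since $i = i_*(\theta)$ is by definition the best arm of model $\theta$, we have $\mu_i(\theta) = \mu_*(\theta)$. Next, the hypothesis $\theta\in\Theta_+$ means precisely that $\mu_*(\theta)\geq \mu_*(\btheta)$. Finally, $\mu_*(\btheta)$ is the largest mean of model $\btheta$, so in particular $\mu_*(\btheta)\geq \mu_i(\btheta)$. Chaining these gives
\begin{align*}
\mu_i(\theta) = \mu_*(\theta) \geq \mu_*(\btheta) \geq \mu_i(\btheta),
\end{align*}
which shows $\mu_i(\theta)\geq \mu_i(\btheta)$ and lets me drop the absolute value.

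With the sign resolved, the conclusion is immediate:
\begin{align*}
\Gamma_i(\theta,\btheta) = \mu_i(\theta) - \mu_i(\btheta) = \mu_*(\theta) - \mu_i(\btheta) \geq \mu_*(\btheta) - \mu_i(\btheta) = \Delta_i(\btheta),
\end{align*}
where the inequality is again just $\mu_*(\theta)\geq\mu_*(\btheta)$.

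There is no serious obstacle here; the only point demanding a moment of care is establishing that the argument of the absolute value in $\Gamma_i(\theta,\btheta)$ is nonnegative, which is exactly what the chain $\mu_i(\theta) = \mu_*(\theta)\geq \mu_*(\btheta)\geq \mu_i(\btheta)$ secures. Everything else is a direct substitution of the definitions of $\Theta_+$, $i_*$, $\mu_*$, and the arm and model gaps.
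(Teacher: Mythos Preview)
Your proof is correct and follows essentially the same approach as the paper's own proof: both establish the chain $\mu_{i_*(\theta)}(\theta) = \mu_*(\theta) \geq \mu_*(\btheta) \geq \mu_{i_*(\theta)}(\btheta)$ to remove the absolute value in $\Gamma_{i_*(\theta)}(\theta,\btheta)$, and then use $\mu_*(\theta)\geq\mu_*(\btheta)$ once more to conclude. If anything, your use of non-strict inequalities matches the definition of $\Theta_+$ in the paper more precisely than the strict inequalities appearing in the paper's version of the argument.
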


\begin{proof}[Proof of Lem.~\ref{lem:suboptimal.arms}]
From the definition of the algorithm we notice that $I_t$ can only correspond to the optimal arm $i_*$ of one model in the set $\Theta_t$. Since $\Theta_t$ can at most contain all the models in $\Theta$, all the arms which are not optimal are never pulled.
\end{proof}

\begin{proof}[Proof of Lem.~\ref{l:mucb.high.prob}]
We compute the probability of the complementary event $\calE^C$, that is that event on which there exist at least one step $t=1,\ldots,n$ where the true model $\bar\theta$ is not in $\Theta_t$. By definition of $\Theta_t$, we have that
\begin{align*}
\calE = \{\forall t, \bar\theta\in\Theta_t\} = \{\forall t, \forall i\in\A, |\mu_i-\hmu_{i,t}|\leq \eps_{i,t}\},
\end{align*}
then
\begin{align*}
\Prob[\calE^C] &= \Prob[\exists t, i, |\mu_i-\hmu_{i,t}|\geq \eps_{i,t}] \leq \sum_{t=1}^n \sum_{i\in\A} \Prob[|\mu_i-\hmu_{i,t}|\geq \eps_{i,t}] = \sum_{t=1}^n \sum_{i\in\A^*(\Theta)} \Prob[|\mu_i-\hmu_{i,t}|\geq \eps_{i,t}]
\end{align*}
where the upper-bounding is a simple union bound and the last passage comes from the fact that the probability for the arms which are never pulled is always 0 according to Lem.~\ref{lem:suboptimal.arms}. At time $t$, $\hmu_{i,t}$ is the empirical average of the $T_{i,t-1}$ samples observed from arm $i$ up to the beginning of round $t$. We define the confidence $\eps_{i,t}$ as
\begin{align*}
\eps_{i,t} = \sqrt{\frac{1}{2T_{i,t-1}} \log \bigg(\frac{|\Theta|n^\alpha}{\delta}\bigg)},
\end{align*}
where $\delta\in(0,1)$ and $\alpha$ is a constant chosen later. Since $T_{i,t-1}$ is a random variable, we need to take an additional union bound over $T_{i,t-1}=1,\ldots,t-1$ thus obtaining
\begin{align*}
\Prob[\calE^C] &\leq \sum_{t=1}^n \sum_{i\in\A^*(\Theta)} \sum_{T_{i,t-1}=1}^{t-1} \Prob[|\mu_i-\hmu_{i,t}|\geq \eps_{i,t}] \\
&\leq \sum_{t=1}^n \sum_{i\in\A^*(\Theta)} \sum_{T_{i,t-1}=1}^{t-1} 2\exp\big(-2T_{i,t-1}\eps_{i,t}^2\big) \leq n(n-1) \frac{|\A^*(\Theta)|\delta}{|\Theta|n^\alpha}.
\end{align*}
Since $|\A^*(\Theta)| < |\Theta|$ (see Lem.~\ref{lem:suboptimal.arms}) and by taking $\alpha=2$ we finally have $\Prob[\calE^C] \leq \delta$.
\end{proof}

\begin{proof}[Proof of Lem.~\ref{lem:hp.suboptimal.arms}]
On event $\calE$, $\Theta_t$ always contains the true model $\btheta$, thus only models with larger optimal value could be selected as the optimistic model $\theta_t = \arg\max_{\theta\in\Theta_t} \mu_*(\theta)$, thus restricting the focus of the algorithm only to the models in $\Theta_+$ and their respective optimal arms.
\end{proof}

\begin{proof}[Proof of Lem.~\ref{l:gaps}]
By definition of $\Theta_+$ we have $\mu_{i_*(\theta)}(\theta) = \mu_*(\theta) > \mu_*(\btheta)$ and by definition of optimal arm we have $\mu_*(\btheta) > \mu_{i_*(\theta)}(\btheta)$, hence $\mu_*(\theta) > \mu_{i_*(\theta)}(\btheta)$. Recalling the definition of model gap, we have $\Gamma_{i_*(\theta)}(\theta) = |\mu_{i_*(\theta)}(\theta) - \mu_{i_*(\theta)}(\btheta)| = \mu_*(\theta) - \mu_{i_*(\theta)}(\btheta)$, where we used the definition of $\mu_*(\theta)$ and the previous inequality. Using the definition of arm gap $\Delta_i$, we obtain
\begin{align*}
\Gamma_{i_*(\theta)}(\theta,\btheta) = \mu_*(\theta) - \mu_{i_*(\theta)}(\btheta) \geq \mu_*(\btheta)- \mu_{i_*(\theta)}(\btheta) = \Delta_{i_*(\theta)}(\btheta),
\end{align*}
which proves the statement.
\end{proof}

\begin{proof}[Proof of Thm.~\ref{thm:mucb.regret}]
We decompose the expected regret as
\begin{align*}
\E[\R_n] = \sum_{i\in \A} \Delta_i \E[T_{i,n}] = \sum_{i\in \A_*(\Theta)} \Delta_i \E[T_{i,n}] \leq n \Prob\{\calE^C\} + \sum_{i\in \A_+} \Delta_i \E[T_{i,n}|\calE],
\end{align*}
where the refinement on the sum over arms follows from Lem.~\ref{lem:suboptimal.arms} and~\ref{lem:hp.suboptimal.arms} and the high probability event $\calE$. In the following we drop the dependency on $\btheta$ and we write $\mu_i(\btheta) = \mu_i$.

We now bound the regret when the correct model is always included in $\Theta_t$. On event $\calE$, only the restricted set of \textit{optimistic} models $\Theta_+ = \{\theta\in\Theta: \mu_*(\theta)\geq \mu_*\}$ is actually used by the algorithm. Thus we need to compute the number of pulls to the suboptimal arms before all the models in $\Theta_+$ are discarded from $\Theta_t$. We first compute the number of pulls to an arm $i$ needed to discard a model $\theta$ on event $\calE$. We notice that
\begin{align*}
\theta\in\Theta_t \Leftrightarrow \{\forall i\in\A, |\mu_i(\theta)-\hmu_{i,t}|\leq \eps_{i,t}\},
\end{align*}
which means that a model $\theta$ is included only when all its means are \textit{compatible} with the current estimates. Since we consider event $\calE$, $|\mu_i-\hmu_{i,t}|\leq \eps_{i,t}$, thus $\theta\in\Theta_t$ only if for all $i\in\A$
\begin{align*}
2 \eps_{i,t} \geq \Gamma_i(\theta, \btheta),
\end{align*}
which corresponds to
\begin{align}\label{e:condition}
T_{i,t-1} \leq \frac{2}{\Gamma_i(\theta,\btheta)^2} \log \bigg(\frac{|\Theta| n^2}{\delta}\bigg),
\end{align}
which implies that if there exists at least one arm $i$ for which at time $t$ the number of pulls $T_{i,t}$ exceeds the previous quantity, then $\forall s>t$ we have $\theta \notin \Theta_t$ (with probability $\Prob(\calE)$). To obtain the final bound on the regret, we recall that the algorithm first selects an optimistic model $\theta_t$ and then it pulls the corresponding optimal arm until the optimistic model is not discarded. Thus we need to compute the number of times the optimal arm of the optimistic model is pulled before the model is discarded. More formally, since we know that on event $\calE$ we have that $T_{i,n}=0$ for all $i\notin\A_+$, the constraints of type~(\ref{e:condition}) could only be applied to the arms $i\in\A_+$. Let $t$ be the last time arm $i$ is pulled, which coincides, by definition of the algorithm, with the last time any of the models in $\Theta_{+,i} = \{\theta\in\Theta_+: i_*(\theta)=i\}$ (i.e., the optimistic models recommending $i$ as the optimal arm) is included in $\Theta_t$. Then we have that $T_{i,t-1}=T_{i,n}-1$ and the fact that $i$ is pulled corresponds to the fact the a model $\theta_i\in\Theta_{+,i}$ is such that
\begin{align*}
\theta_i \in \Theta_t \wedge \forall \theta'\in\Theta_t, \mu_*(\theta_i)>\mu_*(\theta'),
\end{align*}
which implies that (see Eq.~\ref{e:condition})
\begin{align}\label{e:condition2}
T_{i,n} \leq \frac{2}{\min_{\theta\in\Theta_{+,i}}\Gamma_i(\theta,\btheta)^2} \log \bigg(\frac{|\Theta| n^2}{\delta}\bigg) + 1.
\end{align}
where the minimum over $\Theta_{+,i}$ guarantees that all the optimistic models with optimal arm $i$ are actually discarded.\\
Grouping all the conditions, we obtain the expected regret
\begin{align*}
\E[\R_n] \leq K + \sum_{i\in \A_+}  \frac{2\Delta_i(\btheta)}{\min_{\theta\in\Theta_{+,i}}\Gamma_i(\theta,\btheta)^2} \log \big(|\Theta| n^3\big)
\end{align*}
with $\delta=1/n$. Finally we can apply Lem.~\ref{l:gaps} which guarantees that for any $\theta\in\Theta_{+,i}$ the gaps $\Gamma_i(\theta,\btheta)\geq\Delta_i(\btheta)$ and obtain the final statement.
\end{proof}

\textbf{Remark (proof).} The proof of the theorem considers a worst case. In fact, while pulling the optimal arm of the optimistic model $i_*(\theta_t)$ we do not consider that the algorithm might actually discard other models, thus reducing $\Theta_t$ before the optimistic model is actually discarded. More formally, we assume that for any $\theta\in\Theta_t$ not in $\Theta_{+,i}$ the number of steps needed to be discarded by pulling $i_*(\theta_t)$ is larger than the number of pulls needed to discard $\theta_t$ itself, which corresponds to
%
\begin{align*}
\min_{\theta\in\Theta_{+,i}} \Gamma_i^2(\theta,\btheta) \geq \mathop{\max_{\theta\in\Theta^+}}_{\theta\notin\Theta_{+,i}}\Gamma_i^2(\theta,\btheta).
\end{align*}
Whenever this condition is not satisfied, the analysis is suboptimal since it does not fully exploit the structure of the problem and \textit{mUCB} is expected to perform better than predicted by the bound.

\textbf{Remark (comparison to \textit{UCB} with hypothesis testing).}
An alternative strategy is to pair \textit{UCB} with hypothesis testing of fixed confidence $\delta$. Let $\Gamma_{\min}(\btheta) = \min_i\min_\theta \Gamma_i(\theta,\btheta)$, if at time $t$ there exists an arm $i$ such that $T_{i,t} > 2\log(2/\delta)\Gamma_{\min}^2$,
then all the models $\theta\neq\bar\theta$ can be discarded with probability $1-\delta$. Since from the point of view of the hypothesis testing the exploration strategy is unknown, we can only assume that after $\tau$ steps we have $T_{i,\tau}\geq \tau/K$ for at least one arm $i$. Thus after $\tau > 2K \log(2/\delta)/\Gamma_{\min}^2$ steps, the hypothesis testing returns a model $\htheta$ which coincides with $\bar\theta$ with probability $1-\delta$. If $\tau \leq n$, from time $\tau$ on, the algorithm always pulls $I_t =i_*(\htheta)$ and incurs a zero regret with high probability. 
If we assume $\tau \leq n$, the expected regret is
\begin{align*}
\E[\R_n(\text{UCB+Hyp})] \leq O\bigg(\sum_{i\in\A}\frac{\log n\tau}{\Delta_i}\bigg) \leq O\bigg(K\frac{\log n\tau}{\Delta}\bigg).
\end{align*}
We notice that this algorithm only has a mild improvement w.r.t. standard \textit{UCB}. In fact, in \textit{UCB} the big-$O$ notation hides the constants corresponding to the exponent of $n$ in the logarithmic term. This suggests that whenever $\tau$ is much smaller than $n$, then there might be a significant improvement. On the other hand, since $\tau$ has an inverse dependency w.r.t. $\Gamma_{\min}$, it is very easy to build model sets $\Theta$ where $\Gamma_{\min} = 0$ and obtain an algorithm with exactly the same performance as \textit{UCB}.
%

%
\section{Sample Complexity Analysis of \textit{RTP}}\label{app:proofs.rtp}
In this section we provide the full sample complexity analysis of  the \textit{RTP} algorithm. In our analysis we rely on some results of \citet{AnandkumarRHKT12}.  \citet{AnandkumarRHKT12} have  provided perturbation bounds on the error of the orthonormal eigenvectors $\widehat v(\theta)$ and the corresponding  eigenvalues  $\widehat \lambda(\theta)$ in terms of the perturbation error of  the transformed tensor $\epsilon=\|T-\widehat T\|$ \citep[see ][Thm 5.1]{AnandkumarRHKT12}. 
However, this result does not provide us with  the sample complexity  bound on  the estimation error of  model means. Here we complete their analysis  by proving a sample complexity bound on the $\ell_2$-norm of the  estimation error of the means  $\|\mu(\theta)-\widehat \mu(\theta)\|$. 

We follow the following steps in our proof: \textbf{(i)} we bound the error $\epsilon$ in terms of the estimation errors $\epsilon_2:=\|\widehat M_2-M_2\|$ and  $\epsilon_3:=\|\widehat M_3-M_3\|$ (Lem.~\ref{lem:eps.bound.pert}). \textbf{(ii)} we prove high probability bounds on the error $\epsilon_2$ and $\epsilon_3$ using some standard concentration inequality results (Lem. \ref{lem.HighProb.mom}).    The bounds on the errors of the estimates $\widehat v(\theta)$ and   $\widehat \lambda(\theta)$ immediately follow from combining the results  of Lem.~\ref{lem:eps.bound.pert}, Lem. \ref{lem.HighProb.mom} and Thm.~\ref{thm.Anad.RTP}. \textbf{(iii)} Based on these bounds we then prove our main result by bounding the estimation error associated with the inverse transformation $\widehat \mu(\theta)=\widehat \lambda(\theta) \widehat B \widehat v(\theta)$ in high probability. 
%
%
%
%
%

We begin   by recalling the perturbation bound of~\citet{AnandkumarRHKT12}:
\begin{theorem}[\citealp{AnandkumarRHKT12}]
\label{thm.Anad.RTP}
Pick $\eta\in(0,1)$. Define $W:=UD^{-1/2}$,  where $ D\in\mathbb{R}^{m\times m}$ is the diagonal matrix of the $m$ largest eigenvalues of $M_2$ and $U\in\mathbb{R}^{K\times m}$ is the matrix with the eigenvectors associated with the $m$ largest eigenvalues of $ M_2$ as its columns. Then $W$ is a linear mapping which satisfies $W^{\mathsf{T}}M_2 W=\mathbf{I}$. Let $\widehat T=T+E\in \mathbb R^{m\times m\times m}$, where the $3^{\mathrm{rd}}$ order moment tensor $T=M_3( W,  W ,  W )$ is symmetric and orthogonally decomposable in the form of $\sum_{\theta\in \Theta}\lambda(\theta) v(\theta)^{\otimes3}$, where each $\lambda(\theta)>0$  and $\{v(\theta)\}_\theta$  is an orthonormal basis. Define $\epsilon:=\|E\|$ and $\lambda_{\max}=\max_{\theta}\lambda(\theta)$.
Then there exist some constants $C_1,C_2>0$, some polynomial  function $f(\cdot)$, and a permutation $\pi$ on $\Theta$ such that the following holds  w.p. $1-\eta$
\begin{equation*}
\begin{aligned}
\|v(\theta)-\widehat v (\pi(\theta))\|&\leq 8\epsilon /\lambda(\theta),
\\
|\lambda(\theta)-\widehat \lambda (\pi(\theta))|&\leq 5\epsilon,
\end{aligned}
\end{equation*}
for $\epsilon\leq C_1\frac{\lambda_{\min}}{m}$,  $L>\log(1/\eta)f(k)$ and $N\geq C_2(\log(k)+\log\log(\lambda_{\max}/\epsilon))$, where $N$ and $L$ are the internal parameters of RTP algorithm.

\end{theorem}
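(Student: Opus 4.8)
The statement is a perturbation guarantee for the robust tensor power method, so the plan is to analyze the iteration directly. First I would dispense with the preliminary claims. Since $M_2=\sum_\theta\rho(\theta)\mu(\theta)^{\otimes2}$ is symmetric positive semidefinite of rank $m$ (using the linear independence of $\{\mu(\theta)\}$ from Assumption~\ref{asm:ndegen}), its top-$m$ eigendecomposition $M_2=UDU^{\mathsf T}$ gives $W^{\mathsf T}M_2W=D^{-1/2}U^{\mathsf T}UDU^{\mathsf T}UD^{-1/2}=\mathbf I$ because $U^{\mathsf T}U=\mathbf I$. Setting $v(\theta)=\sqrt{\rho(\theta)}\,W^{\mathsf T}\mu(\theta)$ and $\lambda(\theta)=\rho(\theta)^{-1/2}$ then shows $T=M_3(W,W,W)=\sum_\theta\lambda(\theta)v(\theta)^{\otimes3}$, where the Gram matrix of the $\{v(\theta)\}$ equals $W^{\mathsf T}M_2W=\mathbf I$, so they are orthonormal and $T$ is genuinely orthogonally decomposable. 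This reduces everything to the core claim about running power iterations on $\widehat T=T+E$.

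Next I would carry out the single-run convergence analysis. Write a unit iterate as $u=\sum_\theta c_\theta v(\theta)+u^\perp$ and study the unnormalized map $\bar u=\widehat T(\mathbf I,u,u)=\sum_\theta\lambda(\theta)c_\theta^2 v(\theta)+E(\mathbf I,u,u)$. For the noiseless part each coordinate transforms as $c_\theta\mapsto\lambda(\theta)c_\theta^2$, so the ratio $\lambda(\theta)c_\theta$ of the dominant component to any competitor is squared at each step; this is the quadratic contraction that drives $u$ toward a single $v(\theta)$. The noise contributes a term of size at most $\epsilon$ on the unit sphere, since $\|E(\mathbf I,u,u)\|\le\|E\|=\epsilon$ for $\|u\|=1$, and this sets a floor below which the contraction cannot improve. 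I would prove by induction that once an iterate has dominant correlation bounded below, after $N=O(\log k+\log\log(\lambda_{\max}/\epsilon))$ steps the output $\widehat v$ satisfies $\|\widehat v-v(\theta)\|\lesssim\epsilon/\lambda(\theta)$ and $|\widehat T(\widehat v,\widehat v,\widehat v)-\lambda(\theta)|\lesssim\epsilon$; the doubly-logarithmic $N$ (hence the constant $C_2$) is exactly the number of quadratic-contraction steps needed to drive the initial polynomial-in-$m$ gap below the $\epsilon$ floor.

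Then comes initialization, selection, and deflation. Using anti-concentration of a random unit initializer in $\mathbb R^m$, a single draw has correlation $c_\theta\gtrsim 1/\sqrt m$ with the current target eigenvector with constant probability, so among $L>\log(1/\eta)f(k)$ independent restarts at least one lands in the basin of attraction; the failure probability is absorbed by a union bound over the $m$ extractions and $\eta$ (this is where $f$ enters). Running the $N$-step iteration from each restart and keeping the candidate $\widehat v$ maximizing $\widehat T(\widehat v,\widehat v,\widehat v)$ selects the one nearest a true eigenvector, yielding the first pair with the claimed $8\epsilon/\lambda(\theta)$ and $5\epsilon$ accuracy. One then deflates, replacing $\widehat T$ by $\widehat T-\widehat\lambda\,\widehat v^{\otimes3}$, and recurses on the remaining $m-1$ components.

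The main obstacle is the deflation step: I must show the error does not amplify across the $m$ sequential extractions. Subtracting an approximate rank-one term perturbs the residual tensor by the sum of the new eigenpair error and the error already present, so the argument is an induction in which the effective perturbation of the $r$-th deflated tensor must be kept below the same threshold that the per-pair analysis requires. This is precisely why the hypothesis is $\epsilon\le C_1\lambda_{\min}/m$ rather than merely $\epsilon$ small: the $1/m$ slack leaves room for the per-step deflation errors to accumulate over all $m$ rounds without ever exceeding the contraction threshold, and the orthogonality of the $\{v(\theta)\}$ ensures that already-extracted directions do not re-enter the iteration. Once this induction closes, collecting the per-eigenpair bounds under the permutation $\pi$ that matches recovered components to true ones yields the stated guarantees.
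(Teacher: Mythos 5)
This theorem is not proved in the paper at all: it is imported verbatim (with its constants $8\epsilon/\lambda(\theta)$ and $5\epsilon$) as Theorem 5.1 of \citet{AnandkumarRHKT12}, and the paper's appendix only \emph{recalls} it before building the sample-complexity analysis of Lemmas~\ref{lem:DBound}--\ref{lem.HighProb.mom} on top of it. So there is no in-paper proof to compare against; what you have written is, in effect, a blind reconstruction of the proof in the cited source, and it is the right reconstruction. Your reduction is correct and complete at its level: $W^{\mathsf{T}}M_2W=\mathbf{I}$ follows exactly as you compute (rank $m$ is guaranteed by Assumption~\ref{asm:ndegen}), and the choice $v(\theta)=\sqrt{\rho(\theta)}\,W^{\mathsf{T}}\mu(\theta)$, $\lambda(\theta)=\rho(\theta)^{-1/2}$ does yield the orthogonal decomposition, since $\sum_\theta v(\theta)v(\theta)^{\mathsf{T}}=W^{\mathsf{T}}M_2W=\mathbf{I}$ and a square matrix with orthonormal rows has orthonormal columns. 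The three pillars you then identify --- quadratic contraction of $c_\theta\mapsto\lambda(\theta)c_\theta^2$ down to an $\epsilon$ noise floor (giving the doubly logarithmic $N$), random restarts with correlation $\gtrsim 1/\sqrt{m}$ absorbed into $L>\log(1/\eta)f(k)$, and a deflation induction whose error budget is exactly what forces $\epsilon\leq C_1\lambda_{\min}/m$ --- are precisely the ingredients of the original proof.

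Two points where your sketch is thinner than the real argument, should you want to close it: first, the initialization analysis needs more than $\max_\theta|c_\theta|\gtrsim 1/\sqrt{m}$; convergence to the intended component requires a \emph{relative separation} between the largest and second-largest $|\lambda(\theta)c_\theta|$ at the start, and it is the per-draw probability of this separation event that generates the polynomial $f(\cdot)$ in the restart count. Second, the deflation induction you correctly flag as the main obstacle is genuinely the hard lemma: one must show that subtracting the inexact pairs $\widehat\lambda\,\widehat v^{\otimes 3}$ perturbs the residual tensor, \emph{as seen along directions nearly orthogonal to the extracted $v(\theta)$'s}, by only $O(\epsilon)$ rather than $O(r\epsilon)$ after $r$ extractions --- the orthogonality of $\{v(\theta)\}$ gives cancellation, not merely non-interference, and without that refinement the naive accumulation would degrade the final bounds by a factor of $m$ and would not reproduce the stated constants $8$ and $5$. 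As a roadmap, however, your proposal matches the proof that the cited theorem actually has.
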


For ease of exposition we consider the \textit{RTP} algorithm in asymptotic case, i.e.,  $N,L\to\infty$ and  $\eta\approx 1$.
We now prove bounds on the perturbation error $\epsilon$ in terms of the estimation error $\epsilon_2$ and $\epsilon_3$. This requires bounding the error between $W=UD^{-1/2}$ and $\widehat W=\widehat U \widehat D^{-1/2}$ using the following perturbation bounds on  $\|U-\widehat U\|$,  $\|\widehat D^{-1/2}-D^{-1/2}\|$  and $\|\widehat D^{1/2}-D^{1/2}\|$.

\begin{lemma}
\label{lem:DBound}
Assume that $\epsilon_2\leq 1/2 \min( \Gamma_{\sigma},\sigma_{\min})$, then we have
\begin{equation*}
\|\widehat D^{-1/2}-D^{-1/2}\|\leq\frac{2\epsilon_2}{(\sigma_{\min})^{3/2}},
\quad \text{and} \quad
\|\widehat D^{1/2}-D^{1/2}\|\leq\frac{\epsilon_2}{\sigma_{\max}},
\quad\text{and}\quad
\|\widehat U-U\|\leq \frac{2\sqrt{m}\epsilon_2  }{\Gamma_{\sigma}}.
\end{equation*}
\end{lemma}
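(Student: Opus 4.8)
All three inequalities are instances of standard symmetric matrix perturbation theory applied to $M_2$ and its estimate $\widehat M_2$, with the single quantity $\epsilon_2=\|\widehat M_2-M_2\|$ controlling every error. The plan is to handle the eigenvalue matrices $D,\widehat D$ and the eigenvector matrices $U,\widehat U$ separately, after first isolating the two distinct roles of the hypothesis $\epsilon_2\le\tfrac12\min(\Gamma_\sigma,\sigma_{\min})$. The half $\epsilon_2\le\tfrac12\sigma_{\min}$ keeps all retained eigenvalues bounded away from zero, so that $D^{\pm1/2}$ and $\widehat D^{\pm1/2}$ are well defined and the maps $x\mapsto x^{\pm1/2}$ are Lipschitz on the relevant interval; the half $\epsilon_2\le\tfrac12\Gamma_\sigma$ prevents the top-$m$ eigenvalues from crossing, so the $i$-th eigenvector of $\widehat M_2$ can be matched unambiguously to the $i$-th eigenvector of $M_2$ in sorted order.

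For the two diagonal bounds I would first invoke Weyl's inequality, which gives $|\widehat\sigma_i-\sigma_i|\le\|\widehat M_2-M_2\|=\epsilon_2$ for every $i$ once the eigenvalues are sorted decreasingly. Since $D$ and $\widehat D$ are diagonal in this matched order, $\|\widehat D^{\pm1/2}-D^{\pm1/2}\|=\max_i|\widehat\sigma_i^{\pm1/2}-\sigma_i^{\pm1/2}|$, so it suffices to control each scalar difference. I would use the elementary identities $\widehat\sigma_i^{1/2}-\sigma_i^{1/2}=(\widehat\sigma_i-\sigma_i)/(\widehat\sigma_i^{1/2}+\sigma_i^{1/2})$ and $\widehat\sigma_i^{-1/2}-\sigma_i^{-1/2}=(\sigma_i^{1/2}-\widehat\sigma_i^{1/2})/(\sigma_i^{1/2}\widehat\sigma_i^{1/2})$, lower-bound the denominators using $\sigma_i\ge\sigma_{\min}$ together with $\widehat\sigma_i\ge\sigma_i-\epsilon_2\ge\tfrac12\sigma_{\min}$ (exactly where the first half of the hypothesis enters), and then feed in the Weyl bound $|\widehat\sigma_i-\sigma_i|\le\epsilon_2$ to obtain the two stated estimates.

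The eigenvector bound is the delicate step. Here I would apply the Davis--Kahan $\sin\Theta$ theorem to each eigenvector: writing $u_i,\widehat u_i$ for the $i$-th eigenvectors, the angle between them satisfies $\sin\angle(u_i,\widehat u_i)\le\epsilon_2/g_i$, where $g_i$ is the separation of $\sigma_i$ from the rest of the spectrum, which by definition of the minimum eigengap is at least $\Gamma_\sigma$. Choosing the sign of each $\widehat u_i$ to align with $u_i$, the relation $\|\widehat u_i-u_i\|\le\sqrt2\,\sin\angle(u_i,\widehat u_i)$ turns this into a per-column bound of order $\epsilon_2/\Gamma_\sigma$. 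Passing from columns to the matrix via $\|\widehat U-U\|\le\|\widehat U-U\|_F=\big(\sum_{i=1}^m\|\widehat u_i-u_i\|^2\big)^{1/2}\le\sqrt m\,\max_i\|\widehat u_i-u_i\|$ then produces the $\sqrt m$ factor and the constant $2$.

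The main obstacle lies in this last step: I must guarantee that the column-by-column matching of eigenvectors is well-posed and that the denominator in Davis--Kahan really is $\Gamma_\sigma$ rather than the smaller perturbed gap. This is precisely where $\epsilon_2\le\tfrac12\Gamma_\sigma$ is essential, since it certifies that sorted eigenvalues do not swap order (so the permutation $\pi$ of Thm.~\ref{thm:mom} is the identity on these indices) and that every retained eigenvalue stays separated from the remaining spectrum by a constant fraction of $\Gamma_\sigma$, allowing the constants to be absorbed into the stated bound. The only genuinely non-mechanical ingredients are the sign alignment of the eigenvectors and the subspace-versus-vector conversion inside Davis--Kahan; the rest is routine.
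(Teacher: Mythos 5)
Your proof follows essentially the same route as the paper's: Weyl's inequality plus the elementary scalar identities for the diagonal factors, and a per-column eigenvector perturbation bound with the eigengap in the denominator (the paper cites Wedin's theorem where you invoke Davis--Kahan; for a symmetric matrix these give the same $O(\epsilon_2/\Gamma_\sigma)$ column bound), followed by the Frobenius-norm aggregation that produces the $\sqrt{m}$ factor. The one caveat is that your scalar estimate for $\|\widehat D^{1/2}-D^{1/2}\|$ naturally yields $\epsilon_2/\sqrt{\sigma_{\min}}$ rather than the stated $\epsilon_2/\sigma_{\max}$, but the paper itself omits the proof of that particular inequality (and its definition of $\sigma_{\max}$ contains a typo), so this mismatch is inherited from the statement rather than being a flaw in your argument.
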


\begin{proof}

Here we just prove bounds on $\|\widehat D^{-1/2}-D^{-1/2}\|$ and $\|\widehat U-U\|$. The bound on $\|\widehat D^{-1/2}-D^{-1/2}\|$ can be proven  using a  similar argument to that used for bounding  $\|\widehat D^{1/2}-D^{1/2}\|$.
Let $\widehat\Sigma_m=\{\widehat \sigma_1,\widehat \sigma_2,\dots,\widehat \sigma_m\}$ be the set of $m$ largest  eigenvalues of the matrix  $\widehat M_2$. 
We have
\begin{equation*}
\begin{aligned}
& \|\widehat D^{-1/2}-D^{-1/2}\|\overset{(1)}{=} \max_{1 \leq i\leq m}\left|\sqrt{\frac 1{\sigma_i}}-\sqrt{\frac 1{\widehat \sigma_i}}\right |=\max_{1 \leq i\leq m}\left(\frac{\left|\frac 1{\sigma_i}-\frac 1{\widehat \sigma_i}\right |}{\sqrt{\frac 1{\sigma_i}}+\sqrt{\frac 1{\widehat \sigma_i}}}\right)
 \\
 \leq&\max_{1 \leq i\leq m}\left(\sqrt{ \sigma_i}\left|\frac 1{\sigma_i}-\frac 1{\widehat \sigma_i}\right |\right)
 \leq
\max_{1 \leq i\leq m}\left|\frac {\sigma_i-\widehat \sigma_i}{\sqrt{\sigma_i}\widehat \sigma_i}\right |\overset{(2)}{ \leq}
\frac{\epsilon_2}{\sqrt{\sigma_{\min}}(\sigma_{\min}-\epsilon_2)} \overset{(3)}{\leq}\frac{2\epsilon_2}{(\sigma_{\min})^{3/2}},
  \end{aligned}
\end{equation*}
where in (1) we use the fact that the spectral norm of matrix is its largest singular value, which in case of a diagonal matrix coincides with its biggest element, in (2) we rely on the  result of Weyl \citep[see][Thm. 4.11, p. 204]{stewart1990matrix} for bounding the difference between $\sigma_i$ and $\widehat \sigma_i$, and in (3) we make use of  the assumption that $\epsilon_2\leq 1/2 \sigma_{\min}$.

In the case of  $\|U-\widehat U \|$ we rely on the perturbation bound of~\citet{Wedin72}. This result guarantees that  for any positive definite matrix $A$  the difference between the eigenvectors  of $A$ and the perturbed   $\widehat A$ (also positive definite) is small whenever there is a minimum gap between the eigenvalues of   $\widehat A$ and $A$. More precisely, for any positive definite matrix $A$ and  $\widehat A$ such that $||A-\widehat A||\leq \epsilon_A$, let the minimum eigengap be $\Gamma_{A   \leftrightarrow \widehat A}:= \min_{j\neq i}|\sigma_i-\widehat \sigma_j|$, then we have 
\begin{equation}
\label{eq:Wedin}
\|u_i- \widehat u_i \| \leq \frac{\epsilon_A}{\Gamma_{A \leftrightarrow\widehat A}},
\end{equation}
where $(u_i,\sigma_i)$ is an  eigenvalue/vector pair for the matrix $A$.  Based on this result we now bound the error $\|U-\widehat U\|$
\begin{equation*}
\begin{aligned}
\| U - \widehat U \| &\leq    \| U - \widehat U \|_F \leq   \sqrt{\sum_i\| u_i - \widehat u_i \|^2}\overset{(1)}{\leq}\frac{\sqrt{m} \epsilon_2  }{\Gamma_{M_2 \leftrightarrow\widehat M_2}}\overset{(2)}{\leq} \frac{\sqrt{m}\epsilon_2  }{\Gamma_{\sigma}-\epsilon_2}\overset{(3)}{\leq} \frac{2\sqrt{m}\epsilon_2  }{\Gamma_{\sigma}},
\end{aligned}
\end{equation*}
where in (1) we rely on Eq.~\ref{eq:Wedin} and in (2) we rely on the definition of the gap as well as Weyl's inequality. Finally, in (3) We rely on the fact that $\epsilon_2\leq 1/2  \Gamma_{\sigma}$ for bounding denominator from below. 

Our result also   holds for those cases where the multiplicity of some of the  eigenvalues are greater than $1$. Note that for any eigenvalue $\lambda$ with multiplicity $l$    the  linear combination of  the corresponding eigenvectors $\{v_1,v_2,\dots,v_l\}$ is also an eigenvector of the matrix.  Therefore, in this case it suffices to bound the  difference between the eigenspaces of two matrix. The result of  ~\citet{Wedin72} again applies to this case and bounds the difference between the eigenspaces in terms of the perturbation $\epsilon_2$ and $\Gamma_{\sigma}$.  
\end{proof}

We now  bound  $\epsilon$ in terms of $\epsilon_2$ and $\epsilon_3$.

\begin{lemma}
\label{lem:eps.bound.pert}
Let $\mu_{\max}:=\max_{\theta}\|\mu(\theta)\|$, if $\epsilon_2\leq 1/2 \min( \Gamma_{\sigma},\sigma_{\min})$, then the estimation error  $\epsilon$ is bounded as 
 \begin{equation*}
 \epsilon \leq   \left(\frac{m}{\sigma_{\min}}\right)^{3/2}\left(10\epsilon_2 \left(\frac{1}{\Gamma_{\sigma}}+\frac{1}{\sigma_{\min}}\right)\left( \epsilon_3 + \mu^3_{\max} \right)+\epsilon_3\right).
 \end{equation*}
\end{lemma}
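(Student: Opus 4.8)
The plan is to reduce the bound on $\epsilon = \|T - \widehat T\| = \|M_3(W,W,W) - \widehat M_3(\widehat W, \widehat W, \widehat W)\|$ to the three perturbation bounds already supplied by Lem.~\ref{lem:DBound}, via a triangle-inequality decomposition that separates the error coming from the moment estimate ($\widehat M_3$ versus $M_3$) from the error coming from the whitening map ($\widehat W$ versus $W$). The workhorse throughout is the operator-norm inequality for the multilinear map, $\|A(V_1,V_2,V_3)\| \le \|A\|\,\|V_1\|\,\|V_2\|\,\|V_3\|$, which lets me push the tensor arguments through each substitution.

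First I would assemble the auxiliary norm bounds. From the decomposition $W - \widehat W = (U-\widehat U)D^{-1/2} + \widehat U(\widehat D^{-1/2} - D^{-1/2})$, together with $\|D^{-1/2}\| = 1/\sqrt{\sigma_{\min}}$, $\|\widehat U\| = 1$ (orthonormal columns), and the two bounds of Lem.~\ref{lem:DBound}, I obtain $\|W - \widehat W\| \le \frac{2\epsilon_2}{\sqrt{\sigma_{\min}}}\big(\frac{\sqrt m}{\Gamma_\sigma} + \frac{1}{\sigma_{\min}}\big)$. I also record $\|W\| \le 1/\sqrt{\sigma_{\min}}$ and, using $\widehat\sigma_{\min} \ge \sigma_{\min} - \epsilon_2 \ge \sigma_{\min}/2$ under the hypothesis $\epsilon_2 \le \sigma_{\min}/2$, the bound $\|\widehat W\| \le \sqrt{2/\sigma_{\min}}$. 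Finally, since $M_3 = \sum_\theta \rho(\theta)\mu(\theta)^{\otimes 3}$ with $\sum_\theta \rho(\theta) = 1$, the triangle inequality gives $\|M_3\| \le \mu_{\max}^3$, hence $\|\widehat M_3\| \le \epsilon_3 + \mu_{\max}^3$ — this is exactly where the factor $(\epsilon_3 + \mu_{\max}^3)$ in the statement originates.

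Then I would split $T - \widehat T = (M_3 - \widehat M_3)(W,W,W) + \big[\widehat M_3(W,W,W) - \widehat M_3(\widehat W, \widehat W, \widehat W)\big]$. The first term contributes $\le \epsilon_3\|W\|^3 \le \epsilon_3/\sigma_{\min}^{3/2}$, which yields the isolated $\epsilon_3$ summand. The second term I telescope over the three arguments, writing it as $\widehat M_3(W-\widehat W,W,W) + \widehat M_3(\widehat W, W-\widehat W, W) + \widehat M_3(\widehat W,\widehat W, W-\widehat W)$, so that each summand carries exactly one factor $\|W-\widehat W\|$ and two factors drawn from $\{\|W\|,\|\widehat W\|\}$. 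Bounding each by the multilinear inequality and inserting the estimates above gives a contribution of the form $\|\widehat M_3\|\,\|W-\widehat W\|\cdot O(1/\sigma_{\min})$, which after substituting the bound on $\|W - \widehat W\|$ produces the $\epsilon_2$ summand. Collecting the two contributions, using $\sqrt m \le m^{3/2}$ and factoring out $(m/\sigma_{\min})^{3/2}$, delivers the claimed inequality.

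The main obstacle is more bookkeeping than conceptual: in the telescoping step one must track which of $\|W\|$ and $\|\widehat W\|$ appears in each of the three factors (the first summand sees only $\|W\|$, the last only $\|\widehat W\|$), since bounding all of them crudely by $\sqrt{2/\sigma_{\min}}$ inflates the constant past the stated $10$; using instead $\|W\|^2 + \|\widehat W\|\|W\| + \|\widehat W\|^2 \le (3+\sqrt 2)/\sigma_{\min}$ keeps it under control. A secondary point requiring care is justifying the operator-norm inequality for the tensor multilinear map in the relevant norm and confirming $\|\widehat U\|=1$.
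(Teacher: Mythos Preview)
Your proposal is correct and follows essentially the same approach as the paper: the same triangle-inequality split into a moment-error term and a three-fold telescoping of the whitening error, the same use of Lem.~\ref{lem:DBound}, and the same bound $\|\widehat M_3\|\le \epsilon_3+\mu_{\max}^3$. The only cosmetic difference is that you use the sharper operator-norm bounds $\|U\|=\|\widehat U\|=1$ whereas the paper works with the Frobenius bound $\|U\|_F=\sqrt m$ throughout, so that in the paper the factor $(m/\sigma_{\min})^{3/2}$ appears directly while in your argument it is obtained at the end by relaxing $\sqrt m\le m^{3/2}$.
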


\begin{proof}
%
Based on the definitions of $T$ and $\widehat T$ we have
\begin{equation}
\label{eps.bound1}
\begin{aligned}
\epsilon &=\|T-\widehat T \|=\| M_3(W,W,W)-\widehat M_3(\widehat W, \widehat W, \widehat W) \|
\\
&\leq\|M_3(W,W,W)-\widehat M_3(W,W,W)\|+\|\widehat M_3(W,W,W)-\widehat M_3(W,W,\widehat W)\|
\\
&\quad+\|\widehat M_3(W,W,\widehat W)-\widehat M_3(W,\widehat W,\widehat W)\|
+\|\widehat M_3(W,\widehat W,\widehat W)-\widehat M_3(\widehat W,\widehat W,\widehat W)\|
\\
&=\|E_{M_3}(W,W,W)\|+\|\widehat M_3(W,W,W-\widehat W)\|
+\|\widehat M_3(W,W-\widehat W,\widehat W)\|
\\
&\quad+\|\widehat M_3(W-\widehat W,\widehat W,\widehat W)\|,
\end{aligned}
\end{equation}
where $E_{M_3}=M_3-\widehat M_3$. We now bound the  terms in the r.h.s. of Eq.~\ref{eps.bound1}  in terms of   $\epsilon_3$ and $\epsilon_2$.  We begin by bounding  $\|E_{M_3}(W,W,W)\|$:
\begin{equation}
\label{eq:DWBound}
\begin{aligned}
\|E_{M_3}(W,W,W)\|&\leq\|E_{M_3}\|\|W\|^3\leq\|E_{M_3}\|\|U\|^3\|D^{-1}\|^{3/2}\leq\|E_{M_3}\|\|U\|_F^3\|D^{-1}\|^{3/2}
\\&
\overset{(1)}{=} \left(\frac m{\sigma_{\min}}\right)^{3/2}\|E_{M_3}\|\leq\left(\frac m{\sigma_{\min}}\right)^{3/2}\epsilon_3,
\end{aligned}
\end{equation}
where in (1) we use the fact that $U$ is an orthonormal matrix and $D$ is diagonal.
In the case of  $\|\widehat M_3(W,W,W-\widehat W)\|$ we have
\begin{equation*}
\begin{aligned}
&\|\widehat M_3(W,W,W-\widehat W)\|\leq\|W\|^2\|W-\widehat W\|\| \widehat M_3 \|\leq\|W\|^2\|W-\widehat W\|(\| \widehat M_3 -M_3\|+\|M_3\|)
\\
&\overset{(1)}{\leq}\|W\|^2\|W-\widehat W\|( \epsilon_3 + \mu^3_{\max} )\leq\|W\|^2\|UD^{-1/2}-\widehat U \widehat D^{-1/2}\|( \epsilon_3 + \mu^3_{\max} )
\\
&\leq\|W\|^2(\|(U-\widehat U )D^{-1/2}\|+\|\widehat U (\widehat D^{-1/2}-D^{-1/2})\|)\left( \epsilon_3 + \mu^3_{\max} \right)
\\
&\leq\|W\|^2\left(\frac{\|U-\widehat U \|}{\sqrt{\sigma_{\min}}}+\sqrt{m}\|\widehat D^{-1/2}-D^{-1/2}\|\right)\left( \epsilon_3 + \mu^3_{\max} \right).
\end{aligned}
\end{equation*}
where in (1) we use the definition of $M_3$ as a linear combination of the tensor product of the means $\mu(\theta)$.
This result combined with the result of Lem.~\ref{lem:DBound} and  the fact that  $\|W\|\leq  \sqrt{m/\sigma_{\min}}$ (see Eq.~\ref{eq:DWBound}) implies that
\begin{equation}
\label{eq:MWDiff1}
\begin{aligned}
\|\widehat M_3(W,W,W-\widehat W)\|  \leq \frac{m}{\sigma_{\min}}\left(\frac{2\sqrt{m}\epsilon_2}{\Gamma_{\sigma}\sqrt{\sigma_{\min}}}+\frac{2\sqrt{m}\epsilon_2}{(\sigma_{\min})^{3/2}}\right)\left( \epsilon_3 + \mu^3_{\max} \right)
&\\ \leq 2\epsilon_2\left(\frac{m}{\sigma_{\min}}\right)^{3/2}\left(\frac{1}{\Gamma_{\sigma}}+\frac{1}{\sigma_{\min}}\right)\left( \epsilon_3 + \mu^3_{\max} \right).
\end{aligned}
\end{equation}
Likewise one can prove the following perturbation bounds for $\widehat M_3(W,W-\widehat W,\widehat W)$ and $\widehat M_3(W,W-\widehat W,\widehat W)$:
\begin{equation}
\label{eq:MWDiff2}
\begin{aligned}
\|\widehat M_3(W,W-\widehat W,\widehat W)\|&\leq  2\sqrt{2}\epsilon_2\left(\frac{m}{\sigma_{\min}}\right)^{3/2}\left(\frac{1}{\Gamma_{\sigma}}+\frac{1}{\sigma_{\min}}\right)\left( \epsilon_3 + \mu^3_{\max} \right)
\\
\|\widehat M_3(W-\widehat W,\widehat W,\widehat W)\|& \leq4\epsilon_2\left(\frac{m}{\sigma_{\min}}\right)^{3/2}\left(\frac{1}{\Gamma_{\sigma}}+\frac{1}{\sigma_{\min}}\right)\left( \epsilon_3 + \mu^3_{\max} \right).
\end{aligned}
\end{equation}
The result then follows by plugging  the bounds of Eq.~\ref{eq:DWBound}, Eq.~\ref{eq:MWDiff1} and Eq.~\ref{eq:MWDiff2}  into  Eq.~\ref{eps.bound1}.
\end{proof}

We now prove high-probability bounds on $\epsilon_3$ and $\epsilon_2$ when  $M_2$ and $M_3$ are estimated by sampling.

\begin{lemma}
\label{lem.HighProb.mom}
For any $\delta\in(0,1)$, if $\widehat M_2$ and $\widehat M_3$ are computed with samples from $j$ episodes, then we that with probability $1-\delta$:
\begin{equation*}
\epsilon_3\leq K^{1.5}\sqrt{\frac{6\log(2K/\delta)}{j}}\qquad\text{and}\qquad\epsilon_2 \leq 2K\sqrt{\frac{\log(2K/\delta)}{j}}.
\end{equation*}
 \end{lemma}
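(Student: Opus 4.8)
The plan is to bound each empirical moment's deviation from its true counterpart by a straightforward concentration-of-measure argument applied entrywise, followed by a union bound over the $O(K^3)$ tensor entries (and $O(K^2)$ matrix entries), then convert the resulting max-norm bounds into the operator/Euclidean norms $\epsilon_2$ and $\epsilon_3$ that appear in the statement. The key structural fact I would exploit is the one highlighted in Eq.~\ref{eq:emp.mom}: each empirical moment is an average over $j$ independent episodes, $\widehat M_2 = j^{-1}\sum_{l=1}^j \overline\mu_{1l}\otimes\overline\mu_{2l}$ and similarly for $\widehat M_3$. Because the tasks $\btheta^l$ are drawn i.i.d.\ from $\rho$ and the batch averages $\overline\mu_{1l},\overline\mu_{2l},\overline\mu_{3l}$ are conditionally independent estimates of $\mu(\btheta^l)$, each summand is an unbiased estimate of the corresponding moment, so the averages concentrate at the rate $\sqrt{1/j}$.

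First I would fix an entry $(i,i')$ of $M_2$ and write $[\widehat M_2]_{i,i'} = j^{-1}\sum_{l=1}^j [\overline\mu_{1l}]_i [\overline\mu_{2l}]_{i'}$. Since the rewards are bounded (the $\eps_{i,t}$ used throughout rely on Hoeffding-type bounds, so I assume rewards lie in $[0,1]$), each product $[\overline\mu_{1l}]_i[\overline\mu_{2l}]_{i'}$ lies in $[0,1]$, and the terms across $l$ are i.i.d. A direct application of Hoeffding's inequality gives $|[\widehat M_2]_{i,i'}-[M_2]_{i,i'}| \leq \sqrt{\log(2/\delta')/(2j)}$ with probability $1-\delta'$. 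Taking a union bound over the $K^2$ entries with $\delta' = \delta/K^2$ controls $\|\widehat M_2 - M_2\|_{\max}$, and I then pass to the spectral norm via $\|A\| \leq K\|A\|_{\max}$ for a $K\times K$ matrix, which produces the stated $\epsilon_2 \leq 2K\sqrt{\log(2K/\delta)/j}$ after absorbing constants (the factor $K$ from the norm conversion, plus bookkeeping to replace $\log(2K^2/\delta)$ by the cruder $\log(2K/\delta)$). The identical argument on the $K^3$ entries of $M_3$, combined with the tensor max-to-Euclidean conversion $\|A\| \leq K^{3/2}\|A\|_{\max}$, yields the $K^{1.5}$ factor in the bound on $\epsilon_3$.

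The main obstacle I anticipate is technical rather than conceptual: handling the fact that the number of samples per arm per episode, $T^l_{i,n}$, is itself random (determined by the adaptive \textit{umUCB} pulls), so the batch averages $\overline\mu_{1l}$ are not clean averages of a fixed number of i.i.d.\ draws. The saving grace is the guarantee that each arm is pulled at least three times per episode, so the batches are always well-defined; one must argue that conditioning on the realized pull counts preserves the unbiasedness and boundedness needed for Hoeffding, or equivalently treat each $\overline\mu_{1l}$ as a valid (if noisy) bounded estimate of $\mu(\btheta^l)$ and apply the concentration at the level of the episode-averaged products rather than the raw rewards. As long as each $[\overline\mu_{kl}]_i \in [0,1]$ almost surely, the entrywise products remain in $[0,1]$ and the i.i.d.-across-episodes structure is intact, so the Hoeffding step goes through unchanged. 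The only real care is in tracking the constants through the two union bounds and the norm conversions to match the exact numerical factors ($2K$ and $\sqrt{6}K^{1.5}$) claimed in the statement.
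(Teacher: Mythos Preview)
Your approach is essentially the same as the paper's: convert the spectral/operator norms to max-norms via $\|A\|\leq K\|A\|_{\max}$ and $\|A\|\leq K^{3/2}\|A\|_{\max}$, apply an entrywise concentration inequality, and take a union bound over the $O(K^2)$ (resp.\ $O(K^3)$) entries. The paper's proof is in fact a three-line sketch along exactly these lines.

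The one substantive difference is the concentration tool. You invoke Hoeffding under the assumption that the episode-level products $[\overline\mu_{1l}]_i[\overline\mu_{2l}]_{i'}$ are i.i.d.\ across $l$; the paper instead frames each entry's deviation as a sum of bounded martingale differences and applies Azuma. Your own ``main obstacle'' paragraph identifies exactly why Hoeffding is shaky here: the sampling policy in episode $l$ (and hence the distribution of $\overline\mu_{kl}$) depends on $\Theta^l$, which is determined by episodes $1,\dots,l-1$, so the products are \emph{not} i.i.d.\ across episodes---boundedness in $[0,1]$ does not restore independence. What does survive is that, conditional on the past, each episode's product is bounded and (with the three-batch construction) centered at the right moment, which is precisely the martingale-difference structure that Azuma handles. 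Swapping Hoeffding for Azuma is the only repair needed; the constants and the $\sqrt{\log(K/\delta)/j}$ rate are unchanged.
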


\begin{proof}
Using some norm inequalities for the tensors we obtain
\begin{equation*}
\epsilon_3 =\|M_3-\widehat M_3\| \leq K^{1.5} \|M_3-\widehat M_3\|_{\max}= K^{1.5} \max_{i,j,x}|[M_3]_{i,j,x}-[\widehat M_3]_{i,j,x}|.
\end{equation*}
A similar  argument  leads to the bound of  $K \max_{i,j}|[M_2]_{i,j}-[\widehat M_2]_{i,j}|$ on $\epsilon_2$. One can easily show that,  for every $ 1\leq i,j,x\leq K$, the term   $[M_3]_{i,j,x}-[\widehat M_3]_{i,j,x}$ and $[M_3]_{i,j,x}-[\widehat M_3]_{i,j,x}$   can be expressed as a sum  of martingale differences with the maximum value $1/j$.   The result then follows by applying the Azuma's inequality~\citep[e.g., see][appendix, pg. 361]{cesa2006prediction}  and taking the  union bound. 
\end{proof}

We now draw our attention to the proof of our main result.

\begin{proof}[\textbf{Proof of Thm.~\ref{thm:mom}}]
We begin by deriving the condition of Eq. \ref{eq:bound.min.samp}. The assumption on $\epsilon_2$ in Lem. \ref{lem:eps.bound.pert}  and the result of Lem. \ref{lem.HighProb.mom} hold at the same time, w.p. $1-\delta$, if the following inequality holds  
\begin{equation*}
 2K\sqrt{\frac{\log(2K/\delta)}{j}} \leq 1/2\min(\Gamma_{\sigma},\sigma_{\min}).            
\end{equation*}
By solving the bound w.r.t. $j$ we obtain 
\begin{equation}
\label{eq:bound.J.eps2}
j\geq \frac{16K^2 \log(2K/\delta)}{ \min(\Gamma_{\sigma},\sigma_{\min})^2}.
\end{equation}
A similar argument applies in the case of the assumption on $\epsilon$ in Thm.~\ref{thm.Anad.RTP}. The results of Thm.~\ref{thm.Anad.RTP} and Lem. \ref{lem:eps.bound.pert}   hold at the same time if we have 
\begin{equation*}
\eps \leq  \left(\frac{m K}{\sigma_{\min}}\right)^{3/2}\left(20\epsilon_2 \left(\frac{1}{\Gamma_{\sigma}}+\frac{1}{\sigma_{\min}}\right)+\epsilon_3\right)\leq C_1\frac{\lambda_{\min}}{m},
\end{equation*}
where in the first inequality we used that $\eps_3 \leq K^{3/2}$ and $\mu_{\max}^3 \leq K^{3/2}$ by their respective definitions. 
This combined with high probability bounds of Lem. \ref{lem.HighProb.mom} on $\epsilon_1$ and $\epsilon_2$ implies
\begin{equation*}
  \left(\frac{m }{\sigma_{\min}}\right)^{1.5}\left(20K^{2.5}\sqrt{\frac{\log(4K/\delta)}{j}}\left(\frac{1}{\Gamma_{\sigma}}+\frac{1}{\sigma_{\min}}\right)+K^{1.5}\sqrt{\frac{6\log(4K/\delta)}{j}}\right)\leq C_1\frac{\lambda_{\min}}{m}.
\end{equation*}
By solving this bound w.r.t. $j$ (and some simplifications) we obtain w.p. $1-\delta$
\begin{equation*} 
j\geq \frac{43^2 m^5 K^6\log(4K/\delta)}{C_1\sigma^3_{\min}\lambda^2_{\min}}\left(\frac{1}{\Gamma_{\sigma}}+\frac{1}{\sigma_{\min}}\right)^2.
\end{equation*}
Combining this result with that of Eq.\ref{eq:bound.J.eps2} and taking the union bound leads to the bound of  Eq.~\ref{eq:bound.min.samp} on the minimum number of samples.

We now draw our attention to the main result of the theorem. We begin by bounding $\|\mu(\theta)-\widehat \mu(\pi(\theta))\|$ in terms of estimation error term $\epsilon_3$ and $\epsilon_2$: 
\begin{equation}
\label{eq:mubound.fin}
\begin{aligned}
&\|\mu(\theta)-\widehat \mu(\pi(\theta))\| = \|\lambda(\theta) B  v(\theta)- \widehat \lambda(\pi(\theta)) \widehat B  \widehat v(\pi(\theta))\|
\\ 
\leq&
\|(\lambda(\pi(\theta))-\widehat\lambda(\theta) ) B v(\pi(\theta))\|+\|\widehat\lambda(\theta) (B-\widehat B) v(\pi(\theta))\| +\| \widehat \lambda(\theta)  \widehat B (v(\pi(\theta))-\widehat v(\theta) )\| 
\\
\leq&
|\lambda(\theta)-\widehat\lambda(\pi(\theta)) | \|B \|+\widehat\lambda(\pi(\theta))\| B-\widehat B\| +\widehat \lambda(\pi(\theta)) \| \widehat B\| \|v(\theta)-\widehat v(\pi(\theta)) \| ,
\end{aligned}
\end{equation}
where in the last line we rely on the fact that both $v(\theta)$ and $\widehat v(\pi(\theta))$ are normalized vectors. 
We first bound the term $\|B-\widehat B\|$:
\begin{equation*}
\begin{aligned}
\|B-\widehat B\|&= \|UD^{1/2}-\widehat U \widehat D^{1/2}\|\leq  \|(U-\widehat U)D^{1/2}\|  +\|\widehat U( D^{1/2}-\widehat D^{1/2})\|
\\
&\overset{(1)}{\leq}\frac{2\sqrt{m}\epsilon_2\sigma_{\max}}{\Gamma_{\sigma} }  +\frac{\sqrt{m}\epsilon_2}{\sigma_{\max}}\leq \sqrt{m}\epsilon_2\left(\frac{2\sigma_{\max}}{\Gamma_{\sigma}}+\frac{1}{\sigma_{\max}}\right),
\end{aligned}
\end{equation*}
where in (1) we make use of the result of  Lem.~\ref{lem:DBound}. Furthermore, we have
\begin{equation*}
\begin{aligned}
\|\widehat B\|&= \|\widehat U \widehat D^{1/2}\|\leq \sqrt{m \widehat \sigma_{\max}}\leq \sqrt{m} (\sigma_{\max}^{1/2}+\epsilon_2^{1/2})\leq \sqrt{m}(\sigma_{\max}^{1/2}+\sigma_{\min}^{1/2}) \leq  \sqrt{2m\sigma_{\max}},
\end{aligned}
\end{equation*}
where we used the condition on $\epsilon_2$.
This combined with Eq.\ref{eq:mubound.fin} and the result of Thm~\ref{thm.Anad.RTP} and Lem. \ref{lem:eps.bound.pert} implies

\begin{equation*}
\begin{aligned}
\|\mu&(\pi(\theta))-\widehat \mu(\theta)\| \\
&\overset{(1)}{\leq}  5\sqrt{m\sigma_{\max}} \epsilon  +\sqrt{m}\epsilon_2\left(\lambda(\theta)+\epsilon\right)\left(\frac{2\sigma_{\max}}{\Gamma_{\sigma}}+\frac{1}{\sigma_{\max}}\right)+\frac{8\epsilon}{\lambda(\theta)}\sqrt{2m\sigma_{\max}}\left(\lambda(\theta)+\epsilon\right) \\
&\overset{(2)}{\leq}  5\sqrt{m\sigma_{\max}} \epsilon  +\sqrt{m}\epsilon_2\left(\lambda(\theta)+5C_1\frac{\sigma_{\min}}{m}\right)\left(\frac{2\sigma_{\max}}{\Gamma_{\sigma}}+\frac{1}{\sigma_{\max}}\right) +8\sqrt{2m\sigma_{\max}}\left(1+5C_1\frac{\sigma_{\min}}{m}\right)\epsilon
\\
&\leq 5\sqrt{m\sigma_{\max}} \left(\frac{m}{\sigma_{\min}}\right)^{3/2}\left(10\epsilon_2 \left(\frac{1}{\Gamma_{\sigma}}+\frac{1}{\sigma_{\min}}\right)\left( \epsilon_3 + \mu^3_{\max} \right)+\epsilon_3\right)  
\\
&
\quad+\sqrt{m}\epsilon_2\left(\lambda(\theta)+5C_1\frac{\sigma_{\min}}{m}\right)\left(\frac{2\sigma_{\max}}{\Gamma_{\sigma}}+\frac{1}{\sigma_{\max}}\right)
\\
&
\quad+8\sqrt{2m\sigma_{\max}}\left(1+\frac{5C_1}{m}\right)\left(\frac{m}{\sigma_{\min}}\right)^{3/2}\left(10\epsilon_2 \left(\frac{1}{\Gamma_{\sigma}}+\frac{1}{\sigma_{\min}}\right)\left( \epsilon_3 + \mu^3_{\max} \right)+\epsilon_3\right).
\end{aligned}
\end{equation*}
where in (1) we used $||B|| \leq \sqrt{m\sigma_{\max}}$, the bound on $\widehat \lambda(\pi(\theta)) \leq \lambda(\theta) + 5\epsilon$, $\|v(\theta)-\widehat v(\pi(\theta)) \| \leq 8\epsilon / \lambda(\theta)$, in (2) we used $\lambda(\theta) = 1/\sqrt{\rho(\theta)} \geq 1$ and the condition that $\eps \leq 5C_1\sigma_{\min}/m$. 
The result then follows by combining this bound with the high probability bound of Lem. \ref{lem.HighProb.mom} and taking union bound as well as collecting the terms.
\end{proof}


\section{Proofs of Section~\ref{ss:umucb.regret}}\label{app:proofs.umucb}

\begin{lemma}\label{l:umucb.suboptimal.arms}
At episode $j$, the arms $i\notin \A_*^j(\Theta; \btheta^j)$ are never pulled, i.e., $T_{i,n}=0$.
\end{lemma}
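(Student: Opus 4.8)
The plan is to establish the stronger structural fact that, at every step $t$ of the main loop (i.e. $t>3K$), the arm $I_t$ pulled by \textit{umUCB} is non-dominated for the model it selects, namely $I_t\in\A_*^j(\theta_t^j)$. Since $\A_*^j(\theta_t^j)$ is one of the sets in the union $\bigcup_{\theta}\A_*^j(\theta)=\A_*^j(\Theta;\btheta^j)$, this immediately yields the claim: any arm $i\notin\A_*^j(\Theta;\btheta^j)$ is dominated for \emph{every} model and hence can never be the maximizer returned by the selection rule. As in Lem.~\ref{lem:suboptimal.arms} for \textit{mUCB}, this is a purely deterministic argument (it holds with probability $1$ and does not invoke the confidence event), since it relies only on the definition of the selection rule and on the membership condition defining $\Theta_t^j$.

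The core of the argument is a contradiction on the rule $I_t=\arg\max_i B_t^j(i;\theta_t^j)$. Suppose an arm $i$ is dominated for $\theta_t^j$, i.e. $i\notin\A_*^j(\theta_t^j)$. By definition of $\A_*^j(\theta_t^j)$ there is an arm $i'$ with $\hmu_{i'}^j(\theta_t^j)-\eps^j>\hmu_i^j(\theta_t^j)+\eps^j$. First I would upper bound $B_t^j(i;\theta_t^j)\le \hmu_i^j(\theta_t^j)+\eps^j$, which is immediate from the $\min$ in the definition of the $B$-value. The key step is the matching lower bound $B_t^j(i';\theta_t^j)\ge \hmu_{i'}^j(\theta_t^j)-\eps^j$: its first term satisfies $\hmu_{i'}^j(\theta_t^j)+\eps^j\ge \hmu_{i'}^j(\theta_t^j)-\eps^j$ trivially, while for the second term I use that $\theta_t^j\in\Theta_t^j$, whose compatibility condition gives $|\hmu_{i'}^j(\theta_t^j)-\hmu_{i',t}|\le \eps_{i',t}+\eps^j$ and hence $\hmu_{i',t}+\eps_{i',t}\ge \hmu_{i'}^j(\theta_t^j)-\eps^j$. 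Combining these bounds,
\begin{align*}
B_t^j(i;\theta_t^j)\le \hmu_i^j(\theta_t^j)+\eps^j < \hmu_{i'}^j(\theta_t^j)-\eps^j \le B_t^j(i';\theta_t^j),
\end{align*}
so $i$ cannot attain the maximum and $I_t\neq i$. Since $t$ was arbitrary, $i$ is never selected in the loop, which proves $I_t\in\A_*^j(\theta_t^j)$ at every step.

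I expect the main obstacle to be precisely the lower bound on $B_t^j(i';\theta_t^j)$: because the $B$-value is the minimum of a model-based upper confidence bound and a sample-based one, one must rule out that the sample term $\hmu_{i',t}+\eps_{i',t}$ slips below the model-based lower confidence bound $\hmu_{i'}^j(\theta_t^j)-\eps^j$, and this is exactly what the compatibility condition defining $\Theta_t^j$ guarantees. A minor caveat worth flagging is that \textit{umUCB} forcibly pulls every arm three times during initialization, so the statement $T_{i,n}=0$ is to be read for the optimistic selection phase $t>3K$, with the $3K$ forced pulls absorbed into the additive constant of the per-episode regret bound.
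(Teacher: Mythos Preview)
Your proposal is correct and follows essentially the same argument as the paper: both show that for the selected model $\theta_t^j\in\Theta_t^j$ the pulled arm must be non-dominated, by combining the trivial upper bound $B_t^j(i;\theta_t^j)\le \hmu_i^j(\theta_t^j)+\eps^j$ with the lower bound $B_t^j(i';\theta_t^j)\ge \hmu_{i'}^j(\theta_t^j)-\eps^j$ obtained from the compatibility condition defining $\Theta_t^j$, and then invoking the strict domination inequality to contradict the argmax. Your presentation is in fact cleaner than the paper's (which has a somewhat garbled setup line), and you rightly flag the $3K$ initialization pulls, a caveat the paper's own proof leaves implicit.
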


\begin{lemma}\label{l:umucb.high.prob}
If \textit{umUCB} is run with
\begin{align}\label{eq:umucb.eps.def}
\eps_{i,t} = \sqrt{\frac{1}{2T_{i,t-1}} \log \bigg(\frac{2mKn^2}{\delta}\bigg)}, \quad\quad \eps^j = C(\Theta)\sqrt{\frac{1}{j} \log \bigg(\frac{2mKJ}{\delta}\bigg)},
\end{align}
where $C(\Theta)$ is defined in Thm.~\ref{thm:mom}, then the event $\calE = \calE_1 \cap \calE_2$ is such that $\Prob[\calE] \geq 1-\delta$ where $\calE_1 = \{\forall \theta, t, i, |\hmu_{i,t}-\mu_i(\theta)| \leq \eps_{i,t}\}$ and $\calE_2 =\{\forall j,\theta,i, |\hmu_i^j(\theta)-\mu_i(\theta)|\leq \eps^j\}$.
\end{lemma}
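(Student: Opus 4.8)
The plan is to prove that each of the two concentration events $\calE_1$ and $\calE_2$ holds with probability at least $1-\delta/2$, and then to conclude by a union bound that $\Prob[\calE] = \Prob[\calE_1\cap\calE_2]\geq 1-\delta$. The event $\calE_1$ is handled by the same machinery as the \textit{mUCB} analysis, while $\calE_2$ is the genuinely new piece and rests entirely on Thm.~\ref{thm:mom}.

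First I would establish $\calE_1$, which controls the sample-based estimates $\hmu_{i,t}$ around the means of the current task $\btheta^j$ (I read the quantifier in $\calE_1$ as referring to the played model $\btheta^j$). This is exactly the argument of Lem.~\ref{l:mucb.high.prob}, the only change being that the confidence radius now carries $\log(2mKn^2/\delta)$ in place of $\log(mn^2/\delta)$. Since $\hmu_{i,t}$ is the empirical average of $T_{i,t-1}$ i.i.d.\ samples from arm $i$ of $\btheta^j$, Hoeffding's inequality gives $\Prob[|\hmu_{i,t}-\mu_i(\btheta^j)|\geq\eps_{i,t}]\leq 2\exp(-2T_{i,t-1}\eps_{i,t}^2)$. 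Plugging in $\eps_{i,t}$ from Eq.~\ref{eq:umucb.eps.def} makes each term equal to $\delta/(mKn^2)$, and a union bound over the steps $t\leq n$, the at most $K$ arms, and the possible values $T_{i,t-1}\in\{1,\dots,n\}$ of the random pull count yields $\Prob[\calE_1^C]\leq\delta/2$.

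Next I would handle $\calE_2$. For a fixed episode $j$, Thm.~\ref{thm:mom} guarantees (up to the permutation $\pi$, which I fold into the labeling of the estimated models) that $\max_\theta\|\mu(\theta)-\widehat\mu^j(\theta)\|\leq C(\Theta)K^{2.5}m^2\sqrt{\log(K/\delta')/j}$ with probability $1-\delta'$, provided $j$ exceeds the threshold of Eq.~\ref{eq:bound.min.samp}. Since the $\ell_2$-norm dominates every coordinate, $|\hmu_i^j(\theta)-\mu_i(\theta)|\leq\|\mu(\theta)-\widehat\mu^j(\theta)\|$ for all $i$ and $\theta$, so one application of Thm.~\ref{thm:mom} controls every $(i,\theta)$ pair at episode $j$ simultaneously. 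I would then take a union bound over the $J$ episodes, choosing $\delta'$ so that the logarithmic argument becomes $2mKJ/\delta$ and the total failure probability is at most $\delta/2$; this is where the constant $C(\Theta)$ in Eq.~\ref{eq:umucb.eps.def} is understood to absorb the $K^{2.5}m^2$ factor, consistent with treating $C(\Theta)$ as an algorithm parameter as in the Remark following Thm.~\ref{thm:mom}.

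The main obstacle is precisely this reconciliation between the clean form of $\eps^j$ stated in Eq.~\ref{eq:umucb.eps.def} and the detailed bound of Thm.~\ref{thm:mom}: one must argue that $\eps^j$ is a valid high-probability upper bound on $\max_{i,\theta}|\hmu_i^j(\theta)-\mu_i(\theta)|$ uniformly over episodes, including the small-$j$ regime where Eq.~\ref{eq:bound.min.samp} fails and the tensor estimates carry no guarantee. There the bound is only asserted once the sample threshold is met, or equivalently $C(\Theta)$ is taken large enough that $\eps^j$ is a trivially valid (if coarse) radius while $j$ is small. Once both pieces are in place the conclusion is immediate: $\Prob[\calE^C]\leq\Prob[\calE_1^C]+\Prob[\calE_2^C]\leq\delta/2+\delta/2=\delta$.
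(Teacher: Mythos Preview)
Your proposal is correct and follows essentially the same route as the paper: the paper's proof simply says that $\calE_1$ is handled exactly as in Lem.~\ref{l:mucb.high.prob} with the extra union bounds over the $m$ models and all $K$ arms (accounting for the $2mKn^2$ in the log), and that $\calE_2$ follows directly from Thm.~\ref{thm:mom}. Your additional remarks about folding the $K^{2.5}m^2$ factor into $C(\Theta)$ and about the small-$j$ regime are legitimate caveats that the paper itself glosses over, but they do not change the argument.
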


Notice that the event $\calE$ implies that for any episode $j$ and step $t$, the actual model is always in the active set, i.e., $\btheta^j\in\Theta_t^j$.

\begin{lemma}\label{l:umucb.suboptimal.arms.hp}
At episode $j$, all the arms $i\notin \A_+^j(\Theta_+^j(\btheta^j); \btheta^j)$ are never pulled on event $\calE$, i.e., $T_{i,n}=0$ with probability $1-\delta$.
\end{lemma}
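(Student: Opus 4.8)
The plan is to show that every arm actually pulled by \textit{umUCB} lies in $\A_+^j(\Theta_+^j(\btheta^j); \btheta^j)$; since each of the finitely many pulls lands in this set, any arm outside it has $T_{i,n}=0$. I would work throughout on the event $\calE$ of Lem.~\ref{l:umucb.high.prob}, which holds with probability $1-\delta$ and guarantees both that the empirical arm means concentrate around $\mu_i(\btheta^j)$ (via $\calE_1$) and that the \textit{RTP} estimates are accurate (via $\calE_2$); in particular, as noted after Lem.~\ref{l:umucb.high.prob}, $\btheta^j\in\Theta_t^j$ at every step. Fixing a step $t$, I write $\theta := \theta_t^j\in\Theta_t^j$ for the selected model and $I_t=\arg\max_i B_t^j(i;\theta)$ for the pulled arm, and verify the two defining properties of $\A_+^j(\theta;\btheta^j)$ for $I_t$: optimism and non-domination.

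Optimism is the easy half. Because $\btheta^j$ is always active, I would lower bound $\max_i B_t^j(i;\btheta^j)$ by evaluating at $i_*(\btheta^j)$: on $\calE_2$ the model term $\hmu_{i_*(\btheta^j)}^j(\btheta^j)+\eps^j\ge\mu^*(\btheta^j)$, and on $\calE_1$ the sample term $\hmu_{i_*(\btheta^j),t}+\eps_{i_*(\btheta^j),t}\ge\mu^*(\btheta^j)$, so their minimum $B_t^j(i_*(\btheta^j);\btheta^j)\ge\mu^*(\btheta^j)$. Since $\theta$ maximizes $\max_i B_t^j(i;\cdot)$ over $\Theta_t^j$ and $\btheta^j\in\Theta_t^j$, it follows that $B_t^j(I_t;\theta)=\max_i B_t^j(i;\theta)\ge\mu^*(\btheta^j)$, and as $B_t^j(I_t;\theta)\le\hmu_{I_t}^j(\theta)+\eps^j$ this delivers the optimism condition $\hmu_{I_t}^j(\theta)+\eps^j\ge\mu^*(\btheta^j)$.

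The main obstacle is non-domination, where a naive argument fails: the $B$-value of a candidate dominating arm $i'$ is capped by its \emph{sample} bound $\hmu_{i',t}+\eps_{i',t}$, which could a priori be tiny (say if $i'$ is well explored and has low mean under $\btheta^j$), so maximality of $I_t$ alone does not rule out $I_t$ being dominated. The key realization is that one must exploit the membership $\theta\in\Theta_t^j$: its compatibility constraint forces $\hmu_{i',t}+\eps_{i',t}\ge\hmu_{i'}^j(\theta)-\eps^j$ for \emph{every} arm $i'$. Thus if $I_t$ were dominated by $i'$, i.e. $\hmu_{I_t}^j(\theta)+\eps^j<\hmu_{i'}^j(\theta)-\eps^j$, then both arguments of the minimum defining $B_t^j(i';\theta)$ — the model term $\hmu_{i'}^j(\theta)+\eps^j$ and, via the active-set constraint, the sample term — would strictly exceed $\hmu_{I_t}^j(\theta)+\eps^j\ge B_t^j(I_t;\theta)$, yielding $B_t^j(i';\theta)>B_t^j(I_t;\theta)$ and contradicting the maximality of $I_t$. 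Hence $I_t\in\A_*^j(\theta)$.

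Combining the two steps gives $I_t\in\A_+^j(\theta;\btheta^j)$, so this set is nonempty and $\theta=\theta_t^j\in\Theta_+^j(\btheta^j)$, whence $I_t\in\A_+^j(\Theta_+^j(\btheta^j);\btheta^j)$. As this holds at every step on $\calE$, all arms outside this set are never pulled with probability $1-\delta$, which is the claim.
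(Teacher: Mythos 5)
Your proof is correct and follows essentially the same route as the paper: on $\calE$ you lower-bound $B_t^j(i_*(\btheta^j);\btheta^j)$ by $\mu_*(\btheta^j)$, combine it with the maximality of the selected model/arm and the bound $B_t^j(I_t;\theta)\le\hmu_{I_t}^j(\theta)+\eps^j$ to get optimism, and use the active-set compatibility constraint to rule out dominated arms. The only cosmetic difference is that the paper factors the non-domination step out as Lem.~\ref{l:umucb.suboptimal.arms} and cites it, whereas you re-derive it inline.
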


\begin{lemma}\label{l:umucb.pulls.ucb}
At episode $j$, the arms $i \in \A_+^j(\Theta_+^j(\btheta^j); \btheta^j)$ are never pulled more than with a \textit{UCB} strategy, i.e., 
\begin{align}\label{eq:umucb.pulls.ucb}
T_{i,n}^j \leq \frac{2}{\Delta_i(\btheta^j)^2} \log \bigg(\frac{2mKn^2}{\delta}\bigg)+1,
\end{align}
with probability $1-\delta$.
\end{lemma}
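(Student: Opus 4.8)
The plan is to run the standard optimism-based \textit{UCB} argument, exploiting the crucial structural feature that the $B$-value used by \textit{umUCB} to select the pulled arm is capped by the purely sample-based upper confidence bound $\hmu_{i,t}+\eps_{i,t}$ through the $\min$ in its definition. First I would condition on the high-probability event $\calE=\calE_1\cap\calE_2$ of Lem.~\ref{l:umucb.high.prob}, which holds w.p. $1-\delta$ and gives both the empirical concentration $|\hmu_{i,t}-\mu_i(\btheta^j)|\leq\eps_{i,t}$ and the \textit{RTP} accuracy $|\hmu_i^j(\theta)-\mu_i(\theta)|\leq\eps^j$. As noted after that lemma, on $\calE$ the true model is never discarded, so $\btheta^j\in\Theta_t^j$ at every step $t$.

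Next I would lower-bound the $B$-value of the pulled arm by $\mu_*(\btheta^j)$. Since $\btheta^j\in\Theta_t^j$, the true optimal arm $i_*(\btheta^j)$ is available, and on $\calE$ both arguments of the $\min$ defining $B_t^j(i_*(\btheta^j);\btheta^j)$ are at least $\mu_*(\btheta^j)$, because $\hmu_{i_*,t}+\eps_{i_*,t}\geq\mu_{i_*}(\btheta^j)=\mu_*(\btheta^j)$ and likewise $\hmu_{i_*}^j(\btheta^j)+\eps^j\geq\mu_*(\btheta^j)$. Since \textit{umUCB} pulls the arm/model pair maximizing $B_t^j$, the pulled arm satisfies $B_t^j(I_t;\theta_t^j)\geq B_t^j(i_*(\btheta^j);\btheta^j)\geq\mu_*(\btheta^j)$.

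I would then close the argument using the upper bound from the $\min$: for the pulled arm $i=I_t$ we have $B_t^j(i;\theta_t^j)\leq\hmu_{i,t}+\eps_{i,t}$, and on $\calE$ also $\hmu_{i,t}\leq\mu_i(\btheta^j)+\eps_{i,t}$. Chaining gives $\mu_i(\btheta^j)+2\eps_{i,t}\geq\mu_*(\btheta^j)$, i.e.\ $2\eps_{i,t}\geq\Delta_i(\btheta^j)$ for every suboptimal pulled arm. Substituting $\eps_{i,t}=\sqrt{\log(2mKn^2/\delta)/(2T_{i,t-1})}$ and solving yields $T_{i,t-1}\leq 2\log(2mKn^2/\delta)/\Delta_i(\btheta^j)^2$; taking $t$ to be the last step at which $i$ is pulled gives $T_{i,n}=T_{i,t-1}+1$ and hence the claimed bound (the restriction to $i\in\A_+^j(\Theta_+^j(\btheta^j);\btheta^j)$ being inherited from Lem.~\ref{l:umucb.suboptimal.arms.hp}, and the optimal arm satisfying the bound vacuously since $\Delta_{i_*}=0$).

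The derivation is essentially routine once the structure is exposed; the step I would emphasize as the genuine point of the lemma is recognizing that the sample-based term inside the $\min$ forces any pulled arm to satisfy \emph{exactly} the same confidence condition $2\eps_{i,t}\geq\Delta_i(\btheta^j)$ as in plain \textit{UCB}, irrespective of how optimistic or inaccurate the transferred model estimates $\hmu_i^j(\theta)$ are. This is precisely the mechanism that prevents \textit{umUCB} from ever pulling an arm more often than \textit{UCB} would, and is what ultimately rules out negative transfer.
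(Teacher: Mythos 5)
Your proposal is correct and follows essentially the same route as the paper's proof: condition on $\calE$, use the selection rule to get $B_t^j(I_t;\theta_t^j)\geq B_t^j(i_*(\btheta^j);\btheta^j)\geq\mu_*(\btheta^j)$, cap the pulled arm's $B$-value by $\hmu_{i,t}+\eps_{i,t}\leq\mu_i(\btheta^j)+2\eps_{i,t}$, and conclude $2\eps_{i,t}\geq\Delta_i(\btheta^j)$ at the last pull. Your closing observation about the sample-based term in the $\min$ being the mechanism that rules out negative transfer is exactly the intended point of the lemma.
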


Notice that for \textit{UCB} the logarithmic term in the previous statement would be $\log (Kn^2/\delta)$ which would represent a negligible constant fraction improvement w.r.t. \textit{umUCB} whenever the number of models is of the same order of the number of arms.

\begin{lemma}\label{l:pulls.discardable.models}
At episode $j$, for any model $\theta\in(\Theta^j_+(\btheta^j)-\tTheta^j(\btheta^j))$ (i.e., an optimistic model that can be discarded), the number of pulls to any arm $i\in\A_+^j(\theta; \btheta^j)$ needed before discarding $\theta$ is
\begin{align}\label{eq:pulls.discardable.models}
T_{i,n}^j \leq \frac{1}{2\big(\Gamma_i(\theta, \btheta^j)/2 - \eps^j\big)^2} \log \bigg(\frac{2mKn^2}{\delta}\bigg)+1,
\end{align}
with probability $1-\delta$.
\end{lemma}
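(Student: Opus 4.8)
The plan is to carry out the entire argument on the high-probability event $\calE = \calE_1 \cap \calE_2$ of Lem.~\ref{l:umucb.high.prob}, which holds with probability $1-\delta$ and simultaneously controls the empirical arm means of the current task (through $\calE_1$, i.e. $|\hmu_{i,t}-\mu_i(\btheta^j)|\leq\eps_{i,t}$) and the \textit{RTP} model estimates (through $\calE_2$, i.e. $|\hmu_i^j(\theta)-\mu_i(\theta)|\leq\eps^j$). The key observation is that, by the definition of the active set $\Theta_t^j$ in \textit{umUCB}, the model $\theta$ is removed as soon as some arm $i$ violates its compatibility condition, i.e. $|\hmu_i^j(\theta)-\hmu_{i,t}|>\eps_{i,t}+\eps^j$. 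Hence it suffices to determine how many pulls of arm $i$ deterministically force this inequality once we are on $\calE$.

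First I would lower-bound the left-hand side of the discard condition by inserting the true means $\mu_i(\theta)$ and $\mu_i(\btheta^j)$ and applying the triangle inequality together with the two guarantees of $\calE$:
\begin{align*}
|\hmu_i^j(\theta)-\hmu_{i,t}| \geq |\mu_i(\theta)-\mu_i(\btheta^j)| - |\hmu_i^j(\theta)-\mu_i(\theta)| - |\hmu_{i,t}-\mu_i(\btheta^j)| \geq \Gamma_i(\theta,\btheta^j) - \eps^j - \eps_{i,t}.
\end{align*}
A sufficient condition for $\theta$ to be discarded through arm $i$ is therefore $\Gamma_i(\theta,\btheta^j)-\eps^j-\eps_{i,t}>\eps_{i,t}+\eps^j$, which rearranges to $\eps_{i,t}<\Gamma_i(\theta,\btheta^j)/2-\eps^j=\widehat\Gamma_i(\theta;\btheta^j)$, exactly the model-gap quantity appearing in Cor.~\ref{l:umucb.pulls}.

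Next I would invert this threshold. Substituting $\eps_{i,t}=\sqrt{\log(2mKn^2/\delta)/(2T_{i,t-1})}$ into $\eps_{i,t}<\widehat\Gamma_i(\theta;\btheta^j)$ and solving for $T_{i,t-1}$ gives $T_{i,t-1}>\log(2mKn^2/\delta)/(2\widehat\Gamma_i(\theta;\btheta^j)^2)$. Since $\eps_{i,t}$ is strictly decreasing in the number of pulls, once $T_{i,t-1}$ passes this value the compatibility test for arm $i$ must fail and $\theta$ leaves $\Theta_t^j$ for good on $\calE$. Accounting for the off-by-one between the count $T_{i,t-1}$ at the triggering step and the final number of pulls $T_{i,n}^j$ yields the claimed bound $T_{i,n}^j\leq\log(2mKn^2/\delta)/(2\widehat\Gamma_i(\theta;\btheta^j)^2)+1$.

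The step I expect to require the most care is not the concentration (which is entirely inherited from Lem.~\ref{l:umucb.high.prob}, so the remainder is a deterministic argument on $\calE$ needing no further union bound), but rather the bookkeeping of \emph{which} arm actually triggers the discard and the positivity of the gap quantity entering the bound. The displayed estimate is only meaningful when $\widehat\Gamma_i(\theta;\btheta^j)>0$; this is precisely the role of the hypotheses $\theta\in\Theta_+^j(\btheta^j)-\tTheta_+^j(\btheta^j)$ (the model is genuinely discardable) and $i\in\A_+^j(\theta;\btheta^j)$, which identify the arms carrying a model gap large enough to expel $\theta$. For arms with $\widehat\Gamma_i(\theta;\btheta^j)\leq 0$ the worst-case count is vacuous and the discard must be charged to an arm with a larger gap; the minimum over models together with the pairing against the \textit{UCB} bound of Lem.~\ref{l:umucb.pulls.ucb} in Cor.~\ref{l:umucb.pulls} absorbs exactly these cases.
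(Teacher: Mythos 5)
Your proposal is correct and follows essentially the same route as the paper: on the event $\calE$ of Lem.~\ref{l:umucb.high.prob}, the compatibility test forces $\theta$ out of $\Theta_t^j$ as soon as $2(\eps_{i,t}+\eps^j) < \Gamma_i(\theta,\btheta^j)$, which is then inverted in $T_{i,t-1}$ to give the stated bound. Your closing remarks on the positivity of $\widehat\Gamma_i(\theta;\btheta^j)$ and its link to $\theta\notin\tTheta^j(\btheta^j)$ mirror the observation the paper itself makes about the case $\eps^j \geq \Gamma_i(\theta,\btheta^j)/2$.
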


\begin{proof}[Proof of Lem.~\ref{l:umucb.suboptimal.arms}]
We first notice that the algorithm only pulls arms recommended by a model $\theta\in\Theta^j_t$. Let $\hi_*(\theta) = \arg\max_i B_t^j(i; \theta)$ with $\theta\in\Theta_t^j$, and $i\in\A_*^j(\theta; \btheta^j)$. According to the selection process, we have
\begin{align*}
B_t^j(i; \theta) < B_t^j(\hi_*; \theta).
\end{align*}
Since $\theta\in\Theta_t^j$ we have that for any $i$, $|\hmu_{i,t}-\hmu_i^j(\theta)| \leq \eps_{i,t} + \eps^j$ which leads to $\hmu_i^j(\theta) -\eps^j \leq \hmu_{i,t}+\eps_{i,t}$. 
Since $\hmu_i^j(\theta) -\eps^j \leq \hmu_i^j(\theta) +\eps^j$, then we have that
\begin{align*}
\hmu_i^j(\theta) -\eps^j \leq \min\{\hmu_{i,t}+\eps_{i,t},\hmu_i^j(\theta) +\eps^j\} = B_t^j(i; \theta).
\end{align*}
Furthermore from the definition of the $B$-values we deduce that
\begin{align*}
B_t^j(\hi_*; \theta) \leq \hmu_{\hi_*}^j(\theta) + \eps^j.
\end{align*}
Bringing together the previous inequalities, we obtain
\begin{align*}
\hmu_i^j(\theta) -\eps^j \leq \hmu_{\hi_*}^j(\theta) + \eps^j.
\end{align*}
which is a contradiction with the definition of non-dominated arms $\A_*^j(\Theta; \btheta^j)$.
\end{proof}

\begin{proof}[Proof of Lem.~\ref{l:umucb.high.prob}]
The probability of $\calE_1$ is computed in Lem.~\ref{l:mucb.high.prob} with the difference that now we need an extra union bound over all the models and that the union bound over the arms cannot be restricted to the number of models. The probability of $\calE_2$ follows from Thm.~\ref{thm:mom}.
\end{proof}

\begin{proof}[Proof of Lem.~\ref{l:umucb.suboptimal.arms.hp}]
We first recall that on event $\calE$, at any episode $j$, the actual model $\btheta^j$ is always in the active set $\Theta_t^j$. If an arm $i$ is pulled, then according to the selection strategy, there exists a model $\theta\in\Theta_t$ such that
\begin{align*}
B_t^j(i; \theta) \geq B_t^j(\hi_*(\btheta^j); \btheta^j).
\end{align*}
Since $\hi_*(\btheta^j) = \arg\max_i B_t^j(i; \btheta^j)$, then $B_t^j(\hi_*(\btheta^j); \btheta^j) \geq B_t^j(i_*(\btheta^j); \btheta^j)$ where $i_*(\btheta^j)$ is the true optimal arm of $\btheta^j$. By definition of $B(i; \theta)$, on event $\calE$ we have that  $B_t^j(i_*(\btheta^j); \btheta^j) \geq \mu_*(\btheta^j)$ and that $B_t^j(i; \theta) \leq \hmu_i^j(\theta)+\eps^j$. Grouping these inequalities we obtain
\begin{align*}
\hmu_i^j(\theta)+\eps^j \geq \mu_*(\btheta^j),
\end{align*}
which, together with Lem.~\ref{l:umucb.suboptimal.arms}, implies that $i\in\A_+^j(\theta; \btheta^j)$ and that this set is not empty, which corresponds to $\theta\in\Theta_+^j(\btheta^j)$.
\end{proof}

\begin{proof}[Proof of Lem.~\ref{l:umucb.pulls.ucb}]
Let $t$ be the last time arm $i$ is pulled ($T_{i,t-1} = T_{i,n}+1$), then according to the selection strategy we have
\begin{align*}
B_t^j(i; \theta_t^j) \geq B_t^j(\hi_*(\btheta^j); \btheta^j) \geq B_t^j(i_*; \btheta^j),
\end{align*}
where $i_* = i_*(\btheta^j)$.
Using the definition of $B$, we have that on event $\calE$
\begin{align*}
B_t^j(i_*(\btheta^j); \btheta^j) = \min \big\{ (\hmu_{i_*}^j(\btheta^j) + \eps^j); (\hmu_{i_*,t} + \eps_{i_*,t}) \big\} \geq \mu_*(\btheta^j)
\end{align*}
and
\begin{align*}
B_t^j(i; \theta_t^j) \leq \hmu_{i,t} + \eps_{i,t} \leq \mu_{i}(\btheta^j) + 2\eps_{i,t}.
\end{align*}
Bringing the two conditions together we have
\begin{align*}
\mu_i(\btheta^j) + 2\eps_{i,t} \geq \mu_*(\btheta^j) \Rightarrow 2\eps_{i,t} \geq \Delta_i(\btheta^j),
\end{align*}
which coincides with the (high-probability) bound on the number of pulls for $i$ using a \textit{UCB} algorithm and leads to the statement by definition of $\eps_{i,t}$.
\end{proof}

\begin{proof}[Proof of Lem.~\ref{l:pulls.discardable.models}]
According to Lem.~\ref{l:umucb.suboptimal.arms.hp}, a model $\theta$ can only propose arms in $\A_+^j(\theta; \btheta^j)$. Similar to the analysis of \textit{mUCB}, $\theta$ is discarded from $\Theta^j_t$ with high probability after $t$ steps and $j$ episodes if
\begin{align*}
2(\eps_{i,t} + \eps^j) \leq \Gamma_i(\theta, \btheta^j).
\end{align*}
At round $j$, if $\eps^j \geq \Gamma_i(\theta,\btheta^j)/2$ then the algorithm will never be able to pull $i$ enough to discard $\theta$ (i.e., the uncertainty on $\theta$ is too large), but since $i\in\A_*^j(\theta; \btheta^j)$, this corresponds to the case when $\theta \in \tTheta^j(\btheta^j)$. Thus, the condition on the number of pulls to $i$ is derived from the inequality
\begin{align*}
\eps_{i,t} \leq \Gamma_i(\theta, \btheta^j)/2 - \eps^j.
\end{align*}
\end{proof}


\section{Related Work}\label{s:related}

As discussed in the introduction, transfer in online learning has been rarely studied. In this section we review possible alternatives and a series of settings which are related to the problem we consider in this paper.

\noindent \textbf{Models estimation.} 
Although in \textit{tUCB} we use \textit{RTP} for the estimation of the model means, a wide number of other algorithms could be used, in particular those based on the method of moments (MoM).
Recently a great deal of progress has been made  regarding the problem of parameter estimation in    LVM  based on  the method of moments approach (MoM)~\citep{AnandkumarHK12,AnandkumarFHKL12,AnandkumarRHKT12}. 
The main idea of \textit{MoM}  is to match the empirical moments of the data with the model parameters that give rise to nearly the  same corresponding population quantities. In general, matching the model parameters to the observed moments may require solving systems of high-order polynomial equations which is often computationally prohibitive. However, for a rich class of  LVMs, it is possible to efficiently estimate the parameters only based on the low-order moments (up to the third order)~\citep{AnandkumarHK12}. 
Prior to \textit{RTP} various scenarios for \textit{MoM} are considered in the literature for different classes of LVMs  using different  linear algebra techniques to deal with the empirical moments~\cite{AnandkumarHK12,AnandkumarFHKL12}. The variant introduced in \cite[Algorithm B]{AnandkumarHK12}  recovers the matrix of the means $\{\mu(\theta)\}$ up to  a permutation in columns without any knowledge of $\rho$.  Also, theoretical guarantees  in the form of  sample complexity bounds with polynomial dependency on the parameters of interest have been provided for this algorithm.  
The excess correlation analysis (ECA) (Alg. 5 in \citet{AnandkumarFHKL12}) generalizes the idea of the \textit{MoM} to the case that $\rho$ is not fixed anymore but sampled from some Dirichlet distribution.  The parameters of this   Dirichlet distribution is not  to be  known by the learner.\footnote{ We only need to know sum of the parameters of the Dirichlet  distribution $\alpha_0$.} In  this case again we can apply a variant of \textit{MoM} to recover the models. 


\noindent \textbf{Online Multi-task.} 
In the online multi-task learning the task change at each step ($n=1$) but at the end of each step both the true label (in the case of online binary classification) and the identity of the task are revealed. A number of works~\citep{dekel2006online,saha2011online,cavallanti2010linear,lugosi2009online} focused on this setting and showed how the samples coming from different tasks can be used to perform multi-task learning and improve the worst-case performance of an online learning algorithm compared to using all the samples separately.

\noindent \textbf{Contextual Bandit.} In contextual bandit~\citep[e.g., see][]{agarwal2012contextual,langford2007epoch}, at each step the learner observes a context $x_t$ and has to choose the arm which is best for the context. The contexts belong to an arbitrary (finite or continuous) space and are drawn from a stationary distribution. This scenario resembles our setting where tasks arrive in a sequence and are drawn from a $\rho$. The main difference is that in our setting the learner does not observe explicitly the context and it repeatedly interact with that context for $n$ steps. Furthermore, in general in contextual bandits some similarity between contexts is used, while here the models are completely independent.

\noindent \textbf{Non-stationary Bandit.} 
When the learning algorithm does not know when the actual change in the task happens, then the problem reduces to learning in a piece-wise stationary environment. \citet{garivier2011on-upper-confidence} introduces a modified version of \textit{UCB} using either a sliding window or discounting to \textit{track} the changing distributions and they show, when optimally tuned w.r.t. the number of switches $R$, it achieves a (worst-case) expected regret of order $O(\sqrt{TR})$ over a total number of steps $T$ and $R$ switches. Notice that this could be also considered as a partial transfer algorithm. Even in the case when  the switch is directly observed, if $T$ is too short to learn from scratch and to identify similarity with other previous tasks, one option is just to transfer the averages computed before the switch. This clearly introduces a transfer bias that could be smaller than the regret cumulated in the attempt of learning from scratch. This is not surprising since transfer is usually employed whenever the number of samples that can be collected from the task at hand is relatively small. If we applied this algorithm to our setting $T=nJ$ and $R=J$, the corresponding performance would be $O(J\sqrt{n})$, which matches the worst-case performance of \textit{UCB} (and \textit{tUCB} as well) on $J$ tasks. This result is not surprising since the advantage of knowing the switching points (every $n$ steps) could always be removed by carefully choosing the worst possible tasks. Nonetheless, whenever we are not facing a worst case, the non-stationary \textit{UCB} would have a much worse performance than \textit{tUCB}.


\section{Numerical Simulations}\label{app:plus.experiment}

\begin{table}[ht]
\begin{center}
\begin{small}
\begin{tabular}{|r|c|c|c|c|c|c|c|}
\hline
 & Arm1 & Arm2 & Arm3 & Arm4 & Arm5 & Arm6 & Arm7 \\
\hline
$\theta_1$  & 0.9  & 0.75  & 0.45  & 0.55  & 0.58  & 0.61  & 0.65 \\
$\theta_2$  & 0.75  & 0.89  & 0.45  & 0.55  & 0.58  & 0.61  & 0.65 \\
$\theta_3$  & 0.2  & 0.23  & 0.45  & 0.35  & 0.3  & 0.18  & 0.25 \\
$\theta_4$  & 0.34  & 0.31  & 0.45  & 0.725  & 0.33  & 0.37  & 0.47 \\
$\theta_5$  & 0.6  & 0.5  & 0.45  & 0.35  & 0.95  & 0.9  & 0.8 \\
\hline
\end{tabular}
\end{small}
\par\vspace{-0.05in}
\caption{Models.}
\label{t:models}
\end{center}
\end{table}

\begin{table}[ht]
\begin{center}
\begin{small}
\begin{tabular}{|r|c|c|c|}
\hline
& \textit{UCB} & \textit{UCB+} & \textit{mUCB} \\
\hline
$\theta_1$ & 22.31 & 14.87 & 2.33\\
$\theta_2$ & 23.32 & 15.58 & 8.48\\
$\theta_3$ & 33.91 & 25.21 & 2.08\\
$\theta_4$ & 17.91 & 11.17 & 3.48\\
$\theta_5$ & 35.41 & 8.76 & 0\\
\hline
\hline
avg & 26.57 & 15.11 & 3.27\\
\hline
\end{tabular}
\end{small}
\par\vspace{-0.05in}
\caption{Complexity of \textit{UCB}, \textit{UCB+}, and \textit{mUCB}.}
\label{t:complexity}
\end{center}
\end{table}

\begin{figure}[t]
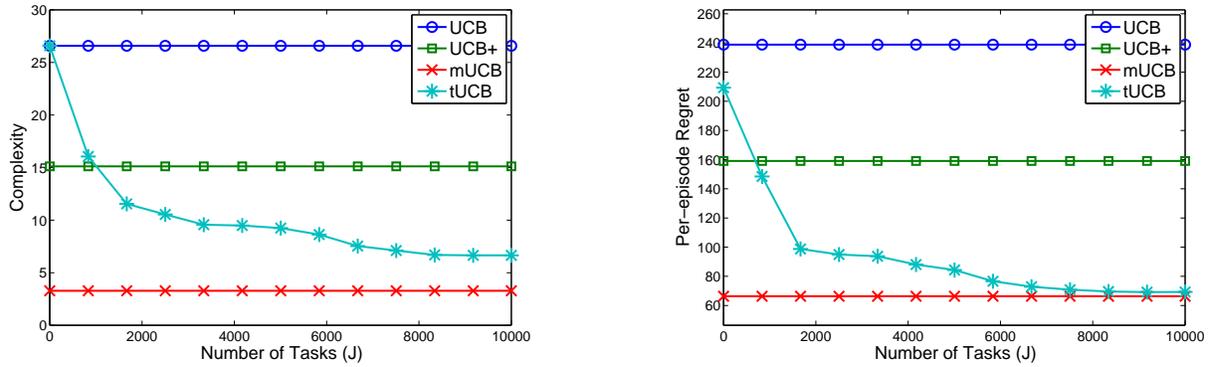

\begin{center}
\begin{minipage}[b]{0.48\linewidth}
\vspace{0pt}
\centering
\includegraphics[trim=0.5cm 0cm 1cm 0cm, clip=true,width=0.85\textwidth]{transfer_complexity_ext.eps}
\par\vspace{-0.0in}
\end{minipage}
\hspace{0.05in}
\begin{minipage}[b]{0.48\linewidth}
\vspace{0pt}
\centering
\includegraphics[trim=0.5cm 0cm 1cm 0cm, clip=true,width=0.85\textwidth]{episode_regret_ext.eps}
\par\vspace{-0.0in}
\end{minipage}
\caption{Complexity and per-episode regret of \textit{tUCB} over tasks.}
\label{f:extended}
\end{center}
\end{figure}

In Table~\ref{t:models} we report the actual values of the means of the arms of the models in $\Theta$, while in Table~\ref{t:complexity} we compare the complexity of 
\textit{UCB}, \textit{UCB+}, and \textit{mUCB}, for all the different models and on average. Finally, the graphs in Fig.~\ref{f:extended} are an extension up to $J=10000$ of the performance of \textit{tUCB} for $n=5000$ reported in the main text.

\end{document}